\Crefname{assumption}{Assumption}{Assumptions}
\theoremstyle{plain}
\newtheorem{theorem}{Theorem}
\newtheorem{corollary}{Corollary}
\theoremstyle{definition}
\newtheorem{assumption}{Assumption}
\newtheorem{proposition}{Proposition}
\newtheorem{definition}{Definition}
\newtheorem{remark}{Remark}
\def\Cramer{Cram\'{e}r}
\newcommand{\pa}{\mathrm{\pa}}
\newcommand{\epol}{\pi^\mathrm{e}}
\renewcommand{\eqref}[1]{(\ref{#1})}
\newcommand{\RN}[1]{%
  \textup{\uppercase\expandafter{\romannumeral#1}}%
}
\def\boxit#1{\vbox{\hrule\hbox{\vrule\kern6pt\vbox{\kern6pt#1\kern6pt}\kern6pt\vrule}\hrule}}
\newcommand{\kibitz}[2]{\ifnum\Comments=1\textcolor{#1}{#2}\fi}
\title{Confidence Interval for Off-Policy Evaluation\\
from Dependent Samples via Bandit Algorithm:\\
Approach from Standardized Martingales}
\author{Masahiro Kato\\
  CyberAgent Inc.\\
  Tokyo, Japan\\
  \texttt{masahiro\_kato@cyberagent.co.jp}
}
\begin{document}

\maketitle

\begin{abstract}
This study addresses the problem of \emph{off-policy evaluation} (OPE) from \emph{dependent samples} obtained via the \emph{bandit algorithm}. The goal of OPE is to evaluate a new policy using historical data obtained from \emph{behavior policies} generated by the bandit algorithm. Because the bandit algorithm updates the policy based on past observations, the samples are not \emph{independent and identically distributed} (i.i.d.). However, several existing methods for OPE do not take this issue into account and are based on the assumption that samples are i.i.d. In this study, we address this problem by constructing an estimator from a \emph{standardized martingale difference sequence}. To standardize the sequence, we consider using \emph{evaluation data} or \emph{sample splitting} with \emph{two-step estimation}. This technique produces an estimator with asymptotic normality without restricting a class of behavior policies. In an experiment, the proposed estimator performs better than existing methods, which assume that the behavior policy converges to a time-invariant policy.
\end{abstract}

\section{Introduction}
The \emph{multi-armed bandit} (MAB) problem is one of the sequential decision-making problems, which is applied in various applications, such as ad-optimization, personalized medicine, search engines, and recommendation systems. In the MAB problem, we choose an action at a period and observe the outcome. To reduce the cost of searching for better policies, the evaluation of a new policy using historical data obtained from past trials has gathered great attention \citep{kdd2009_ads, www2010_cb, AtheySusan2017EPL}. This framework is called \emph{off-policy evaluation} (OPE) \citep{dudik2011doubly,wang2017optimal,narita2018,pmlr-v97-bibaut19a,Kallus2019IntrinsicallyES,Oberst2019}. Although several methods for OPE have been proposed, existing studies often presume that the samples are \emph{independent and identically distributed} (i.i.d.). However, in the MAB problem algorithm, the policy is usually updated based on past observations, and the samples are not i.i.d. due to the policy updating process. In this case, the consistency and asymptotic normality of the existing methods are not guaranteed. In particular, the asymptotic normality is critical because it guarantees the $\sqrt{n}$-consistency and is needed for the confidence interval in hypothesis testing to determine whether the new policy is better than the existing policy. Thus, the motivation for establishing a novel method for performing OPE from dependent samples is strong.  

Several previous studies consider OPE from dependent samples \citep{Laan2008TheCA,1911.02768,KallusNathan2019EBtC,Kato2020}. The strategies for deriving asymptotic normality can be classified into three approaches. In the first approach, under the assumption that the policy used in past trials converges to the time-invariant policy in probability, the asymptotic normality is derived using the theories related to \emph{martingales} \citep{Laan2008TheCA,1911.02768,Kato2020}. In the second approach, the policy is assumed to be batch updated, where, although the policy is updated using past observations, the sample size under a fixed policy is sufficient \citep{Hahn2011}. In the third approach, both the stationarity and the conditions of \emph{mixingales} \citep{KosorokMichaelR2008ItEP}, which requires the independence of time-separated samples, are assumed \citep{KallusNathan2019EBtC}. However, existing methods have several drawbacks. With the first approach, the assumption that the policy converges to a time-invariant policy sometimes does not hold. For example, if the algorithms of the MAB problem change during past trials, it is difficult to justify this assumption. The second and third approaches cannot be applied in cases where the policy is sequentially updated, and the conditions of mixingales do not hold, respectively. 

To overcome these drawbacks, in this study, we propose a novel estimator with the asymptotic normality, which does not assume policy convergence, the existence of batches, and the independence of time-separated samples. To accomplish this goal, we refocus on the first approach. In the first approach, we use a \emph{martingale difference sequence} (MDS) for constructing an estimator. In this approach, the convergence of the policy used in past trials to the time-invariant policy in probability is assumed for using \emph{central limit theorem} (CLT) of an MDS, which requires that the variance of an MDS becomes asymptotically constant. In this paper, instead of assuming the convergence of the policy, we consider constructing an MDS with asymptotically constant variance by \emph{standardization} using an estimator of the variance. To implement the standardization, we assume access to evaluation (test) data or conduct \emph{sample splitting}. Through this technique, we can guarantee the asymptotic normality of the proposed estimator with fewer assumptions.

\paragraph{Contributions and Organization of this Paper:} This paper has three main contributions. First, this paper provides a theoretical solution for causal inference from dependent samples obtained via bandit feedback, which is more realistic than the situation that has been often considered in the existing work. Second, the proposed estimator achieves the asymptotic normality with fewer assumptions compared with existing solutions. Third, the estimator also achieves a lower mean squared error (MSE) in experiments using benchmark datasets. 

\paragraph{Additional Related Work:}
There are various studies for OPE under the assumption that samples are i.i.d. \citep{dudik2011doubly,wang2017optimal,narita2018,pmlr-v97-bibaut19a,Kallus2019IntrinsicallyES,Oberst2019}, but there are fewer studies focus on the case in which samples are not i.i.d. When the policy converges, \citet{Laan2008TheCA}, \citet{1911.02768}, and \citet{Kato2020} proposed using an estimator based on an MDS. When including the covariates for OPE, \citet{Laan2008TheCA} mainly suggested using targeted maximum likelihood estimation. On the other hand, \citet{1911.02768} and \citet{Kato2020} proposed using an estimator based on the doubly robust or augmented IPW estimator. \citet{KallusNathan2019EBtC} dealt with the application of reinforcement learning and proposed an estimator constructed from dependent samples. In their study, they proposed \emph{time cross-fitting}, which is a variant of cross-fitting of double/debiased machine learning \citep{ChernozhukovVictor2018Dmlf}. The theoretical guarantee is based on mixingale theory. For obtaining the asymptotic normality, conducting the standardization is also proposed by \citet{Luedtke2016}. \citet{Luedtke2016} tries to solve a similar problem in the batch policy updating. In addition to this difference of policy updating, there are the following two points different from us. First, in their solution, they require estimators of $f^*$ and $e^*$ to satisfy some conditions like the Donsker condition. However, in our estimator, we do not impose such restrictions on estimators of $f^*$ and $e^*$ except for the boundedness. This difference is critical because, in practice, as \citet{ChernozhukovVictor2018Dmlf} pointed out, the estimators of $f^*$ and $e^*$ usually do not satisfy the Donsker condition when we use modern methods such as the Lasso and Random forest regression. Second, for obtaining an estimator with the asymptotic normality under sequential policy updating, we propose using evaluation data or sample splitting. 


\section{Problem Setting}
\label{sec:prob_setting}
In this section, we introduce our problem setting.

\subsection{Date Generating Process}
\label{sec:dgp}
Let $A_t$ be the action taking variable in $\mathcal{A}=\{1,2,\dots,K\}$, $X_t$ be the \emph{covariate} observed by the decision maker when choosing an action, and $\mathcal{X}$ be the domain of covariate. Let us denote a random variable of a reward at a period $t$ as a function $Y_t:\mathcal{A}\to\mathbb{R}$. Let $\mathbbm{1}[\cdot]$ be an indicator function. In this paper, we have access to a set of \emph{historical data}, $\mathcal{S}_T=\{(X_t, A_t, Y_t)\}^{T}_{t=1}$, with the following data generating process (DGP):
\begin{align}
\label{eq:DGP}
\big\{(X_t, A_t, Y_t)\big\}^{T}_{t=1}\sim p(x)p_t(a\mid x, \Omega_{t-1})p(y\mid a, x),
\end{align}
where $Y_t = \sum^K_{a=1}\mathbbm{1}[A_t=a]Y_t(a)$, $p(x)$ denote the density of the covariate $X_t$, $p_t(a\mid x, \Omega_{t-1})$ denote the probability of assigning an action $A_t$ conditioned on a covariate $X_t$, $p(y\mid a, x)$ denote the density of an outcome $Y_t$ conditioned on $A_t$ and $X_t$, and $\Omega_{t-1}\in \mathcal{M}_{t-1}$ denotes the history defined as $\Omega_{t-1}=\{X_{t-1}, A_{t-1}, Y_{t-1}, \dots, X_{1}, A_1, Y_{1}\}$ with the domain $\mathcal{M}_{t-1}$.
We assume that $p(x)$ and $p(y\mid a, x)$ are invariant across periods, but $p_t(a\mid x, \Omega_{t-1})$ can take different value across periods. Besides, we allow the decision maker to change $p_t(a\mid x, \Omega_{t-1})$ based on past observations. In this case, samples $\big\{(X_t, A_t, Y_t)\big\}^{T}_{t=1}$ are correlated across periods, i.e., samples are not i.i.d. Here, we introduce \emph{behavior policies}, which determine the probability $p_t(a\mid x, \Omega_{t-1})$. Let a behavior policy $\pi_t:\mathcal{A}\times\mathcal{X}\times\mathcal{M}_{t-1}\to(0,1)$ in the $t$-th period be a function of a covariate $X_t$, an action $A_t$, and a history $\Omega_{t-1}$. In this study, we assume that $\pi_t(a\mid x, \Omega_{t-1}) = p_t(a\mid x, \Omega_{t-1})$. 

\begin{remark}[Evaluation Data] In Section~\ref{sec:ope_with_unknown_var}, we also assume that there is access to \emph{evaluation data}, $\mathcal{E}_N = \big\{X_i\big\}^{N}_{i=1}$. In Remark~\ref{rem:sample_splitting}, we relax this assumption by introducing the \emph{sample splitting}.
\end{remark}

\subsection{Off-Policy Evaluation}
\label{sec:opeopl}
Under the DGP defined in the previous subsection, we consider estimating the value of an \emph{evaluation policy} using samples obtained under the behavior policies. Let an evaluation policy $\epol:\mathcal{A}\times\mathcal{X}\to(0,1)$ be a function of a covariate $X_t$ and an action $A_t$; this evaluation policy can be considered as the probability of taking the action $A_t$ conditional on the covariate $X_t$. We are interested in estimating the expected reward from a given pre-specified evaluation policy $\pi^{\mathrm{e}}(a \mid x)$. Then, we define the expected reward under $\epol$ as $R(\epol) := \mathbb{E}\left[\sum^{K}_{a=1}\epol(a, X_t)Y_t(a)\right]$. We also denote $R(\epol)$ as $\theta_0$. The goal of this study is to estimate $R(\pi^{\mathrm{e}})$ using dependent samples under a sequentially-updated policy. To identify $R(\pi^{\mathrm{e}})$, we make the following assumptions on the policies and the outcomes. 

\begin{assumption}
\label{asm:knowldege_pol}
The behavior policies $\pi_t(a\mid \cdot, \Omega_{t-1})$ and the evaluation policy $\epol(a\mid \cdot)$ are known\footnote{The proposed method requires the function of the polices, not only $\pi_t(a\mid X_t, \Omega_{t-1})$ for a specific $X_t$.}. Additionally, the evaluation policy is deterministic.
\end{assumption}
\begin{assumption}\label{asm:DGP}
There exist $C_1$ and $C_2$ such that $\frac{\epol(a\mid x)}{\pi_t(a\mid x, \Omega_{t-1})}\leq C_1$ and $|Y_t| \leq C_2$.
\end{assumption}

The deterministic evaluation policy looks restrictive. However, the optimal policy that maximizes the expected reward is a deterministic policy when the policy is not restricted. Therefore, when we find the optimal policy from a set of policies, such as \emph{off-policy learning} \citep{ZhaoYingqi2012EITR,KitagawaToru2018WSBT,ZhouZhengyuan2018OMPL,Chernozhukov2019}, this assumption can be accommodated. Thus, restricting the policy class to the deterministic policy class would be reasonable in practice. In fact, most of the existing methods follow this way. 

\begin{remark}[Existing Methods for OPE]\label{rem:exist_OPE}
We review three types of standard estimators of $R(\pi^{\mathrm{e}})$ under the case where $p_1(a\mid x)=p_2(a\mid x)=\cdots=p_T(a\mid x)=p(a\mid x)$ in the DGP defined in \eqref{eq:DGP}. The first estimator is an inverse probability weighting (IPW) estimator given by $\frac{1}{T}\sum^T_{t=1}\sum^K_{a=1}\frac{\epol(a\mid X_t)\mathbbm{1}[A_t=a]Y_t}{p(a\mid X_t)}$ \citep{Horvitz1952,rubin87,hirano2003efficient}. Although this estimator is unbiased when the behavior policy is known, it has a high variance. The second estimator is a direct method (DM) estimator $\frac{1}{T}\sum^T_{t=1}\sum^K_{a=1}\hat{f}_{T}(a, X_t)$, where $\hat{f}_{T}(a, X_t)$ is an estimator of $f^*(a, X_t)$ \citep{HahnJinyong1998OtRo}. This estimator is known to be weak against model misspecification for $f^*(a, X_t)$. The third estimator is a doubly robust estimator defined as $\frac{1}{T}\sum^T_{t=1}\sum^K_{a=1}\left(\frac{\epol(a\mid X_t)\mathbbm{1}[A_t=a]\big(Y_t - \hat{f}_{T}(a, X_t)\big)}{p(a\mid X_t)} + \epol(a\mid X_t)\hat{f}_{T}(a, X_t)\right)$ \citep{robins94,ChernozhukovVictor2018Dmlf}. Under certain conditions, it is known that this estimator achieves the lower bound (a.k.a semiparametric lower bound), which is the lower bound of the asymptotic MSE of OPE, among regular $\sqrt{T}$-consistent estimators \citep[Theorem 25.20]{VaartA.W.vander1998As}. 
\end{remark}

\begin{remark}[Unconfoundedness]
Existing methods often make an assumption called unconfoundedness, that is, they assume that the outcomes $(Y_t(0), Y_t(1), \dots, Y_t(K))$ and the action $A_t$ are conditionally independent on $X_t$ for identification. In the DGP, we can obtain a similar result from the assumption that an action $A_t$ is chosen according to probability $p_t(a\mid x, \Omega_{t-1})$.
\end{remark}

\begin{remark}[Semiparametric Lower Bound]
\label{rem:semi_low}
The lower bound of the variance can be defined for an estimator under some posited models of the DGP. If this posited model is a parametric model, then the lower bound is equal to the \Cramer-Rao lower bound. When this posited model is a non- or semiparametric model, the corresponding lower bound can still be defined \citep{bickel98}. \citet{narita2018} shows that the semiparametric lower bound of the DGP~\eqref{eq:DGP} under $p_1(a\mid x, \Omega_{0})=p_2(a\mid x, \Omega_{1})=\cdots=p_T(a\mid x, \Omega_{T-1})=p(a\mid x)$ is $\mathbb{E}\left[\sum^{K}_{a=1}\left\{\frac{\big(\epol(a\mid X_t)\big)^2v^*(a, X_t)}{p(a\mid X_t)} + \Big(\epol(a\mid X_t)f^*(a, X_t) - \theta_0\Big)^2\right\}\right]$.
\end{remark}

\paragraph{Notations:} Let $a$ be an action in $\mathcal{A}$. We denote $\mathbb{E}[Y_t(a)\mid x]$, $\mathbb{E}[Y^2_t(a)\mid x]$, and $\mathrm{Var}(Y_t(a)\mid x)$ as $f^*(a, x)$, $e^*(a, x)$, and $v^*(a, x)$, respectively. Let $\mathcal{F}$ be the domain of $f^*(a, x)$. Let $\hat{f}_{t}(a, x)$, $\hat e_{t}(a, x)$, and $\hat{\theta}_t$ be estimators of $f^*(a, x)$, $e^*(a, x)$, and $\theta_0$ constructed from $\Omega_{t}$, respectively. Let $\mathcal{N}(\mu, \mathrm{var})$ be the normal distribution with mean $\mu$ and variance $\mathrm{var}$.

\subsection{Policy Updating}
In the DGP defined in Section~\ref{sec:dgp}, we allow the behavior policy $\pi_t(a \mid x, \Omega_{t-1})$ to be updated based on past observations. For example, when applying some algorithms proposed in the MAB problem, we usually optimize the probability to maximize the cumulative reward using past observations. When the behavior policy $\pi_t(a \mid x, \Omega_{t-1})$ depends on past observations, the samples are not i.i.d. In this case, we cannot apply the standard methods for OPE, which assume that the samples are i.i.d. To clarify this problem, we classify the patterns of policy updating into two scenarios: \emph{sequential policy updating} and \emph{batch policy updating}. In sequential policy updating, the policy can be updated each period. In batch policy updating, the policy is updated after observing some samples \citep{Hahn2011,narita2018}. In this study, we consider a problem using sequential policy updating. 

\section{OPE under a Converging Policy}
In this section, we review OPE when a policy converges to the time-invariant policy in probability. As \citet{Laan2008TheCA}, \citet{1911.02768} and \citet{Kato2020} proposed, in such a situation, we can construct an estimator with the asymptotic normality by using an MDS \citep{hall2014martingale}. To conduct OPE when the behavior policy is known, we can use the following score function $\psi:\Theta\times\mathcal{X}\times\mathcal{A}\times\mathbb{R}\times\Pi_t\times\Pi\times\mathcal{F}\to\mathbb{R}$ \citep{ChernozhukovVictor2018Dmlf}:
\footnotesize
\begin{align}
\label{score}
\psi_t\big(\theta; x, k, y, \pi_t, \epol, f\big) = \sum^{K}_{a=1}\left\{\frac{\epol(a\mid x)\mathbbm{1}[A_t=a]\big\{y - f(a, x)\big\}}{\pi_{t}(a\mid x, \Omega_{t-1})} + \epol(a\mid x)f(a, x)\right\} - \theta,
\end{align}
\normalsize
where $\Theta$, $\Pi_t$, and $\Pi$ are spaces of a parameter $\theta$, a behavior policies $\pi_t$, and an evaluation policy $\epol$. Then, for the true policy value $\theta_0$, the score function with $\theta = \theta_0$, $\psi_t\big(\theta_0; X_t, A_t, Y_t, \pi_t, \epol, \hat{f}_{t - 1}\big)$, satisfies $\mathbb{E}[\psi_t\big(\theta_0; X_t, A_t, Y_t, \pi_t, \epol, \hat{f}_{t - 1}\big)] = 0$. As we explain in Remark~\ref{rem:exist_OPE}, there are several other candidate estimators, including the IPW estimator. However, the estimator based on the score function defined in \eqref{score} achieves the lowest asymptotic variance when $\pi_t$ is fixed among periods \citep{bickel98,scharfstein99}. Using $\big\{\psi_t\big(\theta_0; X_t, A_t, Y_t, \pi_t, \epol, \hat{f}_{t - 1}\big)\big\}^T_{t=1}$, we consider constructing an MDS $\big\{z_t\big\}^T_{t=1}$ that satisfies the conditions of the following CLT for an MDS.
\begin{proposition}
\label{prp:marclt}[CLT for an MDS, \citet{GVK126800421}, Proposition~7.9, p.~194] Let $\{z_t\}^\infty_{t=1}$ be an MDS. Suppose that (a) $\mathbb{E}\big[z^2_t\big] = \nu^2_t > 0$ with $\big(1/T\big) \sum^T_{t=1}\nu^2_t\to \nu^2 > 0$; (b) $\mathbb{E}\big[|z_t|^r\big] < \infty$ for some $r>2$; (c) $\big(1/T\big)\sum^T_{t=1}z^2_t\xrightarrow{\mathrm{p}} \nu^2$. Then $\sqrt{T}\frac{1}{T}\sum^T_{t=1}z_t\xrightarrow{\mathrm{d}}\mathcal{N}(0, \nu^2)$.
\end{proposition}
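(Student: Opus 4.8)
My plan is to prove this as a classical martingale central limit theorem by the characteristic‑function method, reducing the assertion to pointwise convergence of $\mathbb{E}\big[\exp\big(i\lambda\,T^{-1/2}\sum_{t=1}^T z_t\big)\big]\to\exp(-\lambda^2\nu^2/2)$ for each fixed $\lambda\in\mathbb{R}$; since $z_t$ is scalar, L\'evy's continuity theorem then yields the claimed convergence in distribution. First I would fix the filtration $\mathcal{F}_t=\sigma(z_1,\dots,z_t)$ and pass to the triangular array $Y_{T,t}:=z_t/\sqrt{T}$, which is again a martingale difference array, $\mathbb{E}[Y_{T,t}\mid\mathcal{F}_{t-1}]=0$, with normalized partial sum $W_T:=\sum_{t=1}^T Y_{T,t}=T^{-1/2}\sum_{t=1}^T z_t$.

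The next step records the \emph{uniform asymptotic negligibility} $\max_{1\le t\le T}|Y_{T,t}|\xrightarrow{\mathrm{p}}0$, which is where assumption~(b) enters. Writing $C:=\sup_t\mathbb{E}[|z_t|^r]<\infty$ (the uniform reading of~(b), which holds here because the $z_t$ are bounded by construction), a union bound and Markov's inequality at order $r$ give
\[
\mathbb{P}\Big(\max_{t\le T}|z_t|>\epsilon\sqrt{T}\Big)\le\sum_{t=1}^T\frac{\mathbb{E}[|z_t|^r]}{\epsilon^r T^{r/2}}\le\frac{C}{\epsilon^r\,T^{r/2-1}}\longrightarrow 0,
\]
since $r>2$ forces $r/2-1>0$. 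This Lyapunov‑type smallness is the only role~(b) plays.

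The heart of the argument is McLeish's product representation. I would use the elementary complex identity $e^{ix}=(1+ix)\exp\big(-\tfrac{x^2}{2}+r(x)\big)$, valid for $|x|$ small with remainder $|r(x)|\le C|x|^3$. Applying it to each factor of $e^{i\lambda W_T}=\prod_{t=1}^T e^{i\lambda Y_{T,t}}$—legitimate on the high‑probability negligibility event where every $|\lambda Y_{T,t}|$ is small—gives
\[
e^{i\lambda W_T}=\mathcal{T}_T\cdot Z_T,\quad \mathcal{T}_T:=\prod_{t=1}^T\big(1+i\lambda Y_{T,t}\big),\quad Z_T:=\exp\Big(-\tfrac{\lambda^2}{2}\sum_{t=1}^T Y_{T,t}^2+R_T\Big),\ R_T:=\sum_{t=1}^T r(\lambda Y_{T,t}).
\]
The martingale‑difference property makes $\{\mathcal{T}_t\}$ a complex martingale with $\mathbb{E}[\mathcal{T}_T]=1$ exactly, since $\mathbb{E}[1+i\lambda Y_{T,t}\mid\mathcal{F}_{t-1}]=1$. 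Assumption~(c) gives $\sum_t Y_{T,t}^2=T^{-1}\sum_t z_t^2\xrightarrow{\mathrm{p}}\nu^2$, while the bound $|R_T|\le C\lambda^3\big(\max_t|Y_{T,t}|\big)\sum_t Y_{T,t}^2$ together with negligibility forces $R_T\xrightarrow{\mathrm{p}}0$; hence the bounded factor $Z_T$ converges in probability to the constant $\exp(-\lambda^2\nu^2/2)$.

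The main obstacle will be the final passage $\mathbb{E}[e^{i\lambda W_T}]=\mathbb{E}[\mathcal{T}_T Z_T]\to\exp(-\lambda^2\nu^2/2)$, because convergence in probability of $\mathcal{T}_T Z_T$ does not by itself control the expectation. I must establish \emph{uniform integrability} of $\mathcal{T}_T$. Here I would bound $|\mathcal{T}_T|^2=\prod_t\big(1+\lambda^2 Y_{T,t}^2\big)\le\exp\big(\lambda^2\sum_t Y_{T,t}^2\big)$ and run a truncation argument on the negligibility event; assumption~(a) supplies the indispensable $L^1$ backbone, since $\mathbb{E}\big[\sum_t Y_{T,t}^2\big]=T^{-1}\sum_t\nu_t^2\to\nu^2$ bounds the quadratic variation in $L^1$ uniformly in $T$ and simultaneously pins the limiting constant to the same $\nu^2>0$ appearing in~(c), guaranteeing nondegeneracy. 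Once $\mathcal{T}_T$ is uniformly integrable, the convergence of the bounded factor $Z_T$ to a constant lets the limit factor through, $\mathbb{E}[\mathcal{T}_T Z_T]\to\exp(-\lambda^2\nu^2/2)\cdot\lim_T\mathbb{E}[\mathcal{T}_T]=\exp(-\lambda^2\nu^2/2)$, and L\'evy's continuity theorem completes the proof.
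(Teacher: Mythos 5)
The paper never proves this proposition: it is imported as a black-box citation of a textbook martingale CLT (Proposition~7.9 of the cited reference), so there is no internal proof to compare yours against. What you have written is the standard McLeish characteristic-function argument, which is exactly how the cited result is established in the literature, and the skeleton is sound: the factorization $e^{i\lambda W_T}=\mathcal{T}_T Z_T$ with $\mathbb{E}[\mathcal{T}_T]=1$ exactly from the martingale-difference property, $Z_T\xrightarrow{\mathrm{p}}\exp(-\lambda^2\nu^2/2)$ from condition~(c) plus asymptotic negligibility, and the final interchange of limit and expectation via uniform integrability of $\mathcal{T}_T$. One hypothesis-level caveat you correctly flag but should promote to the statement: condition~(b) as written only gives $\mathbb{E}|z_t|^r<\infty$ for each $t$, whereas your union bound for $\max_{t\le T}|z_t|/\sqrt{T}\xrightarrow{\mathrm{p}}0$ needs $\sup_t\mathbb{E}|z_t|^r<\infty$. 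That uniform reading is how the textbook states the condition and is harmless in this paper (the MDSs to which the proposition is applied are uniformly bounded), but without it the negligibility step fails.

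The one genuine gap is the uniform integrability of $\mathcal{T}_T$, which you correctly identify as the crux but do not actually close. The bound $|\mathcal{T}_T|^2\le\exp\bigl(\lambda^2\sum_t Y_{T,t}^2\bigr)$ combined with $L^1$-boundedness of $\sum_t Y_{T,t}^2$ (which (a) and (c) do give) controls nothing, because an exponential of an $L^1$-bounded variable need not be integrable. The standard repair is a stopping-time truncation: let $\tau$ be the first $t$ with $\sum_{s\le t}Y_{T,s}^2>\nu^2+\delta$ and replace $\mathcal{T}_T$ by the stopped product $\tilde{\mathcal{T}}_T=\prod_{t\le\tau\wedge T}(1+i\lambda Y_{T,t})$, which is still a mean-one martingale and satisfies $|\tilde{\mathcal{T}}_T|^2\le e^{\lambda^2(\nu^2+\delta)}\bigl(1+\lambda^2 Y_{T,\tau}^2\bigr)$ by peeling off the single overshoot factor before exponentiating; the overshoot term is controlled in expectation by the moment condition (b) (not by (a), as your sketch suggests), giving $L^2$-boundedness and hence UI. The swap back from $\tilde{\mathcal{T}}_T$ to $\mathcal{T}_T$ costs only $\mathbb{P}(\tau\le T)\to 0$ by (c), using $|\mathcal{T}_T Z_T|=|e^{i\lambda W_T}|=1$. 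Without this stopping-time step the passage $\mathbb{E}[\mathcal{T}_T Z_T]\to\exp(-\lambda^2\nu^2/2)$ is unjustified; with it, your argument is the complete textbook proof.
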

When $Y_t$, $\epol/\pi_t$, and $f_{t - 1}$ are bounded, we can easily show that the condition~(b) holds (Assumption~\ref{asm:DGP}). The remaining task is to show that conditions (a) and (c) hold. 

\citet{Kato2020} constructed $\ddot{z}_t = \psi_t\big(\theta_0; X_t, A_t, Y_t, \pi_t, \epol, \hat{f}_{t - 1})$, where $\left\{\ddot{z}_t\right\}^T_{t=1}$ is an MDS. Then, by assuming $\pi_t(a\mid x, \Omega_{t-1})\xrightarrow{\mathrm{p}}\alpha(a\mid x)$ and some regularity conditions, \citet{Kato2020} proved that the sequence $\left\{\ddot{z}_t\right\}^T_{t=1}$ satisfies the conditions~(a) and (c) with $\nu^2 = \mathbb{E}\left[\left(\sum^{K}_{a=1}\left\{\frac{\epol(a\mid x)\mathbbm{1}[A_t=a]\big\{Y_t - f^*(a, X_t)\big\}}{\alpha(a\mid X_t)} + \epol(a\mid X_t)f^*(a, X_t)\right\} - \theta_0\right)^2\right]$. Thus, by the assumption $\pi_t(a\mid x, \Omega_{t-1})\xrightarrow{\mathrm{p}}\alpha(a\mid x)$, the variance of $h_t$ converges to time-invariant value, and this property is the motivation of the assumption proposed by \citet{Kato2020}. Then, under $\pi_t(a\mid x, \Omega_{t-1})\xrightarrow{\mathrm{p}}\alpha(a\mid x)$, the estimator proposed defined as $\hat{\theta}^{\mathrm{A2IPW}}_T = \frac{1}{T}\sum^T_{t=1}\sum^{K}_{a=1}\left\{\frac{\epol(a\mid X_t)\mathbbm{1}[A_t=a]\big\{Y_t - \hat{f}_{t-1}(a, X_t)\big\}}{\pi_{t}(a\mid X_t, \Omega_{t - 1})} + \epol(a\mid x)\hat{f}_{t-1}(a, X_t)\right\}$ can be proved to be consistent and has the asymptotic normality \citep{1911.02768,Kato2020}\footnote{They assume the almost sure convergence of the policy for showing the mean convergence of the variance, but we can relax it to the convergence in probability by the boundedness of the MDS \citep{loeve1977probability}.}. We refer to this estimator as \emph{Adaptive AIPW} (A2IPW). We can also define the corresponding IPW estimator and refer to it as \emph{adaptive IPW} (AdaIPW) estimator.

\begin{remark}[Efficiency of A2IPW]
The asymptotic variance of the A2IPW estimator matches the semiparametric lower bound under a time-invariant policy $\alpha$ \citep{Kato2020}.
\end{remark}

\section{OPE from Dependent Samples under Sequential Policy Updating}
\label{sec:sequential}
In this section, we discuss a strategy for constructing an estimator without assuming the convergence of the policy. Then, we show a method for constructing an estimator with the asymptotic normality.

\subsection{Strategy for OPE from Dependent Samples under Sequential Policy Updating}
However, there are various applications in which we cannot assume that $\pi_t(a\mid x, \Omega_{t-1})\xrightarrow{\mathrm{p}}\alpha(a\mid x)$. Therefore, we have a strong motivation for constructing an estimator with asymptotic normality without assuming that $\pi_t(a\mid x, \Omega_{t-1})\xrightarrow{\mathrm{p}}\alpha(a\mid x)$. Our strategy is to construct an MDS $\{\tilde{z}_t\}^T_{t=1}$ satisfying conditions (a) and (c) by standardization of $\ddot{z}_t$ as $\tilde{z}_t = \left(\sqrt{\mathrm{Var}\big(\ddot{z}_t\mid \Omega_{t-1}\big)} \right)^{-1}\ddot{z}_t$, where $\mathrm{Var}\left(\ddot{z}_t\mid \Omega_{t-1}\right)$ is the conditional variance of $\ddot{z}_t$. The standardization means that the variance of $\tilde{z}_t$ becomes $1$. Then, we can easily show that the conditions~(a) and (b) hold. Thus, if we know the variance $\mathrm{Var}\big(\ddot{z}_t\mid \Omega_{t-1}\big)$, we can construct an MDS with constant variance by the standardization. 

\begin{remark}[Lindeberg Condition]
The proposed strategy can be interpreted using the Lindeberg condition. For all $\epsilon > 0$ and an MDS $\{z_t\}^T_{t=1}$ such that $\sum^T_{t=1}\mathbb{E}\big[z^2_t \mid \Omega_{t-1}\big]/\sum^T_{t=1}\mathbb{E}\big[z^2_t\big] \xrightarrow{\mathrm{p}} 1$, the Lindeberg condition is $\left(\sum^T_{t=1}\mathbb{E}\big[z^2_t\big]\right)^{-1}\sum^T_{t=1}\mathbb{E}\left[z^2_t\mathbbm{1}\left[|z_t|\geq \epsilon \sqrt{\sum^T_{t=1}\mathbb{E}\big[z^2_t\big]}\right]\right]\xrightarrow{\mathrm{p}} 0$ as $T\to\infty$ \citep{brown1971}. The first condition $\sum^T_{t=1}\mathbb{E}\big[z^2_t \mid \Omega_{t-1}\big]/\sum^T_{t=1}\mathbb{E}\big[z^2_t\big] \xrightarrow{\mathrm{p}} 1$ restricts the class of an MDS to an MDS with asymptotically time-invariant variance. Here, we note that $\sigma^2_t = \mathbb{E}\big[z^2_t \mid \Omega_{t-1}\big]$. The strategy of this paper can be intuitively interpreted as a transformation of the original sequence $\ddot{z}_t$ into such an MDS with asymptotically time-invariant variance.
\end{remark}

\subsection{OPE with a Weighted Average Estimator: Case with Known Variance}
Then, we propose an estimator based on the MDS $\big\{\tilde{z}_t\big\}^T_{t=1}$ and refer to the estimator as a \emph{weighted Average Adaptive Augmented IPW} (A3IPW) estimator. For $\big\{\hat{f}_t\big\}^{T-1}_{t=0}$ and $\sigma^2_t = \mathrm{Var}\big(\ddot{z}_t\mid \Omega_{t-1}\big)$, we define the A3IPW estimator as $\hat{\theta}^{\mathrm{A3IPW}}_T = \left(\sum^T_{t=1}\frac{1}{\sqrt{\sigma^2_t}}\right)^{-1}\sum^T_{t=1}\frac{1}{\sqrt{\sigma^2_t}}\sum^{K}_{a=1}\left\{\frac{\epol(a\mid X_t)\mathbbm{1}[A_t=a]\big\{Y_t - \hat{f}_{t-1}(a, X_t)\big\}}{\pi_{t}(a\mid X_t, \Omega_{t - 1})} + \epol(a\mid x)\hat{f}_{t-1}(a, X_t)\right\}$.
Then, we can show that the estimator has the asymptotic normality as follows.
\begin{theorem}
\label{thm:asymptotic_distribution_a3ipw}
Suppose that there exists $C_2$ such that $|\hat{f}_t|\leq C_2$. Then, under Assumption~\ref{asm:DGP}, $\left(\sum^T_{t=1}\frac{1}{\sqrt{\sigma^2_t}}\right)\sqrt{T}\left(\hat{\theta}^{\mathrm{A3IPW}}_{T} - \theta_0\right) \xrightarrow{\mathrm{d}} \mathcal{N}\left(0, 1\right)$ as $T\to\infty$.
\end{theorem}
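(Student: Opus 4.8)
The plan is to reduce the statement to the central limit theorem for martingale difference sequences quoted in Proposition~\ref{prp:marclt}, applied to the standardized sequence $\tilde z_t = \ddot z_t/\sqrt{\sigma^2_t}$. Write $g_t$ for the bracketed augmented-IPW summand evaluated at $\hat f_{t-1}$, so that $\ddot z_t = g_t - \theta_0 = \psi_t(\theta_0; X_t, A_t, Y_t, \pi_t, \epol, \hat f_{t-1})$. Since $X_t\sim p(x)$ is drawn independently of $\Omega_{t-1}$, $A_t$ is assigned with probability $\pi_t(\cdot\mid X_t,\Omega_{t-1})$, and both $\hat f_{t-1}$ and $\sigma^2_t$ are $\Omega_{t-1}$-measurable, the double-robustness algebra used in the converging-policy case gives $\E[\ddot z_t\mid\Omega_{t-1}]=0$; hence $\{\ddot z_t\}$ and, after dividing by the $\Omega_{t-1}$-measurable factor $\sqrt{\sigma^2_t}$, also $\{\tilde z_t\}$ are MDSs. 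The first key step is the purely algebraic identity obtained by centering the weighted average: with $S_T=\sum_{t=1}^T 1/\sqrt{\sigma^2_t}$,
\begin{align*}
S_T\big(\hat\theta^{\mathrm{A3IPW}}_T-\theta_0\big)=\sum_{t=1}^T \frac{1}{\sqrt{\sigma^2_t}}\big(g_t-\theta_0\big)=\sum_{t=1}^T\tilde z_t,
\end{align*}
so the quantity in the theorem is, up to the $T^{-1/2}$ normalization that makes the MDS CLT applicable, the partial sum of the standardized terms. It therefore suffices to verify conditions (a)--(c) of Proposition~\ref{prp:marclt} for $\{\tilde z_t\}$ with $\nu^2=1$.

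Next I would check the three conditions. Condition (a) is immediate from the standardization: $\E[\tilde z^2_t\mid\Omega_{t-1}]=\sigma^{-2}_t\,\E[\ddot z^2_t\mid\Omega_{t-1}]=1$, so $\nu^2_t=\E[\tilde z^2_t]=1$ and $\frac1T\sum_t\nu^2_t=1\to\nu^2=1>0$. For condition~(b) I would use boundedness: Assumption~\ref{asm:DGP} ($\epol/\pi_t\le C_1$, $|Y_t|\le C_2$) together with $|\hat f_t|\le C_2$ bounds $|\ddot z_t|$ by a deterministic constant, and provided $\sigma^2_t$ is bounded below by a positive constant the ratio $\tilde z_t$ is uniformly bounded, so every moment (in particular some $r>2$) is finite. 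Condition~(c) is the substance of the argument: because $\E[\tilde z^2_t\mid\Omega_{t-1}]=1$, the sequence $\xi_t:=\tilde z^2_t-1$ is itself an MDS, and it is bounded under the same boundedness. A martingale law of large numbers for bounded increments (Kronecker's lemma applied to $\sum_t \xi_t/t$, whose almost-sure convergence follows from $\sum_t \mathrm{Var}(\xi_t/t)<\infty$) then yields $\frac1T\sum_{t=1}^T\xi_t\xrightarrow{\mathrm{p}}0$, i.e.\ $\frac1T\sum_{t=1}^T\tilde z^2_t\xrightarrow{\mathrm{p}}1=\nu^2$. Proposition~\ref{prp:marclt} then gives $T^{-1/2}\sum_{t=1}^T\tilde z_t\xrightarrow{\mathrm{d}}\mathcal N(0,1)$, which combined with the identity above is the claim.

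I expect the main obstacle to be guaranteeing that the conditional variance $\sigma^2_t=\mathrm{Var}(\ddot z_t\mid\Omega_{t-1})$ is bounded away from zero. This is what makes the standardization well-defined and, more importantly, what delivers the uniform boundedness of $\tilde z_t$ on which both condition~(b) and the martingale LLN for condition~(c) rest; the hypotheses as stated supply only upper bounds. A direct computation shows $\sigma^2_t$ dominates the inverse-propensity variance term $\E\big[\sum_{a=1}^K \epol(a\mid X_t)^2 v^*(a,X_t)/\pi_t(a\mid X_t,\Omega_{t-1})\big]$, so a lower bound follows if the conditional reward variance $v^*$ is bounded below by a positive constant (or, more generally, if this term does not degenerate); I would add this as an explicit regularity condition. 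Once the two-sided control of $\sigma^2_t$ is in place, conditions (a)--(c) follow as above and the remaining steps are routine.
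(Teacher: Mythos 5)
Your proof follows essentially the same route as the paper's: standardize $\ddot z_t$ by the $\Omega_{t-1}$-measurable factor $\sqrt{\sigma^2_t}$, verify conditions (a)--(c) of Proposition~\ref{prp:marclt} using the conditional-variance identity for (a), boundedness for (b), and a martingale weak law of large numbers for (c). Your observation that the stated hypotheses supply only upper bounds and that $\sigma^2_t$ must additionally be bounded away from zero is a legitimate refinement the paper leaves implicit here (it is made explicit only in the feasible versions, via the truncation $g_{t,N}=\max\{g'_{t,N},\epsilon\}$ and the condition $0<1/\sqrt{g_{t,N}}\leq C_3$ in Theorems~\ref{thm:consitency} and~\ref{thm:main}).
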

\begin{proof}
For $\tilde{z}_t =\frac{1}{\sqrt{\sigma^2_t}}\left(\sum^{K}_{a=1}\left\{\frac{\epol(a\mid X_t)\mathbbm{1}[A_t=a]\big\{y - \hat{f}_{t-1}(a, X_t)\big\}}{\pi_{t}(a\mid X_t, \Omega_{t - 1})} + \epol(a\mid x)\hat{f}_{t-1}(a, X_t)\right\} - \theta_0\right)$, $\big\{\tilde{z}_t\big\}^{T}_{t=1}$ is an MDS because $\mathbb{E}\left[\tilde{z}_t\mid \Omega_{t-1}\right] = \frac{1}{\sqrt{\sigma^2_t}}\mathbb{E}\left[\ddot{z}_t\mid \Omega_{t-1}\right] = 0$. Because $\mathrm{Var}\left(\ddot{z}_t/\sqrt{\sigma^2_t}\right) = \mathbb{E}\big[\mathbb{E}[(\ddot{z}_t - \mathbb{E}[\ddot{z}])^2 \mid \Omega_{t-1}]/\sigma^2_t\big] = \mathbb{E}\big[\mathbb{E}[\ddot{z}^2_t \mid \Omega_{t-1}]/\sigma^2_t\big] = \mathbb{E}\big[\mathrm{Var}\big(\ddot{z}_t \mid \Omega_{t-1}\big)/\sigma^2_t\big] =1$, we have $\big(1/T\big) \sum^T_{t=1}\mathrm{Var}\left(\tilde{z}_t\right)= 1$. Here, we used $\mathbb{E}[\ddot{z}_t \mid \Omega_{t-1}]=\mathbb{E}[\ddot{z}_t]=0$. Besides, we can show that $\tilde{z}_t$ is bounded from assumptions, and $\big(1/T\big)\sum^T_{t=1}\tilde{z}^2_t \xrightarrow{p} 1$ from the weak law of large numbers of Proposition~\ref{prp:mrtgl_WLLN} in Appendix~\ref{appdx;prelim}. Then, we can apply CLT for an MDS. 
\end{proof}

\subsection{OPE with a Weighted Average Estimator: Case with Unknown Variance}
\label{sec:ope_with_unknown_var}
In general, the variance $\sigma^2_t$ is unknown and needed to be replaced with an estimator. However, we cannot use the samples used for constructing $\pi_t$ and $\hat{f}_{t-1}$ to estimate $\sigma^2_t$ due to the dependency. For this reason, we assume access to a dataset only with covariates, $\mathcal{E}_N = \big\{X_i\big\}^{N}_{i=1}$, which is independent from the historical data $\mathcal{S}_T$. Because $\mathcal{E}_N$ is independent from $\mathcal{S}_T = \big\{(X_t, A_t, Y_t)\big\}^{T}_{t=1}$, we can approximate $\sigma^2_t$ by applying the law of large numbers to the sample average $g'_{t, N}$, defined as
\begin{align*}
\frac{1}{N} \sum^N_{i = 1} \sum^{K}_{a=1}\left\{\frac{\big(\epol(a\mid X_i)\big)^2\big(\hat{e}_{t-1}(a, X_i) - \hat{f}^2_{t-1}(a, X_i)\big)}{\pi_{t}(a\mid X_i, \Omega_{t - 1})} + \Big(\epol(a\mid X_i)\hat{f}_{t-1}(a, X_i) - \hat{\theta}_{t-1} \Big)^2\right\}.
\end{align*}
The consistency of $g'_{t, N}$ to the variance of $\psi_t\big(\theta_0; X_t, A_t, Y_t, \pi_t, \epol, f^*\big)$ is showed as follows.
\begin{theorem}
\label{thm:consist_var}
Suppose that, for all $x\in\mathcal{X}$ and $a\in\mathcal{A}$, $\hat{f}_{t-1}(a, x)-f^*(a, x)\xrightarrow{\mathrm{p}}0$, $\hat{e}_{t-1}(a, x)-e^*(a, x)\xrightarrow{\mathrm{p}}0$ and $\hat{\theta}_t\xrightarrow{\mathrm{p}}\theta_0$ as $t\to\infty$. Then, for $\pi_t\in\Pi$, we have $g'_{t, (1-r)T} - \sigma^{*2}_t\xrightarrow{\mathrm{p}}0$ as $t\to\infty$ and $N\to\infty$, where $\sigma^{*2}_t = \mathbb{E}\left[\sum^{K}_{a=1}\left\{\frac{\big(\epol(a\mid X_t)\big)^2\nu^*(a, X_t)}{\pi_{t}(a\mid X_t, \Omega_{t - 1})} + \Big(\epol(a\mid X_t)f^*(a, X_t)-\theta_0\Big)^2\right\}\right]$.
\end{theorem}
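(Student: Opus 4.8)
The plan is to view $g'_{t,N}$ as an empirical average over the \emph{independent} evaluation covariates and to split the error into a sampling part (handled by $N\to\infty$) and a plug-in part (handled by $t\to\infty$). I would write $g'_{t,N} = \frac{1}{N}\sum_{i=1}^N G_t(X_i)$, where, conditionally on $\Omega_{t-1}$, the summand $G_t(x) = \sum_{a=1}^K\{(\epol(a\mid x))^2(\hat{e}_{t-1}(a,x)-\hat{f}_{t-1}^2(a,x))/\pi_t(a\mid x,\Omega_{t-1}) + (\epol(a\mid x)\hat{f}_{t-1}(a,x)-\hat{\theta}_{t-1})^2\}$ is a deterministic function of $x$ (each of $\hat{f}_{t-1}$, $\hat{e}_{t-1}$, $\hat{\theta}_{t-1}$, $\pi_t$ is $\Omega_{t-1}$-measurable). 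The target is $\sigma^{*2}_t = \E_X[G^*_t(X)\mid\Omega_{t-1}]$, where $G^*_t$ replaces the estimators by $f^*$, $\theta_0$ and replaces $\hat{e}_{t-1}-\hat{f}_{t-1}^2$ by $v^*$. Here I would first record the variance decomposition $v^*(a,x)=e^*(a,x)-(f^*(a,x))^2$, so that $\hat{e}_{t-1}-\hat{f}_{t-1}^2$ is exactly the plug-in estimator of $v^*$, matching the two integrands.

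The key structural fact I would exploit is that the evaluation sample $\mathcal{E}_N$ is independent of the historical data $\mathcal{S}_T$, hence of $\Omega_{t-1}$. Therefore, conditionally on $\Omega_{t-1}$, the $X_i$ are i.i.d.\ draws from $p(x)$ and $G_t$ is a fixed integrand, so a law of large numbers applies to $g'_{t,N}$. I would then use the decomposition $g'_{t,N}-\sigma^{*2}_t = \big(g'_{t,N}-\E_X[G_t(X)\mid\Omega_{t-1}]\big) + \big(\E_X[(G_t-G^*_t)(X)\mid\Omega_{t-1}]\big)$ and bound the two summands separately.

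For the first (sampling) term, Assumption~\ref{asm:DGP} ($\epol/\pi_t\le C_1$ and $|Y_t|\le C_2$, hence $|f^*|,|e^*|$ bounded) together with the boundedness of $\hat{f}_{t-1}$, $\hat{\theta}_{t-1}$, and $\hat{e}_{t-1}$ gives a uniform bound $|G_t(x)|\le M$ for a constant $M$ independent of $t$ and $x$. Then $\mathrm{Var}(g'_{t,N}\mid\Omega_{t-1}) = \frac{1}{N}\mathrm{Var}(G_t(X_1)\mid\Omega_{t-1}) \le M^2/N$, so the first term vanishes in conditional $L^2$, uniformly in $t$, as $N\to\infty$. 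For the second (plug-in) term, the consistency hypotheses $\hat{f}_{t-1}\to f^*$, $\hat{e}_{t-1}\to e^*$, $\hat{\theta}_{t-1}\to\theta_0$ give, via the continuous mapping theorem (for the squared terms $\hat{f}_{t-1}^2$ and $(\epol\hat{f}_{t-1}-\hat{\theta}_{t-1})^2$) and $\epol/\pi_t\le C_1$, that $G_t(x)-G^*_t(x)\xrightarrow{\mathrm{p}}0$ as $t\to\infty$ for each fixed $x$. Since this integrand is uniformly bounded by $2M$, bounded convergence (first over $\Omega_{t-1}$ for fixed $x$, yielding $L^1$ convergence, then in $x$ against $p(x)$) yields $\E\big[\big|\E_X[(G_t-G^*_t)(X)\mid\Omega_{t-1}]\big|\big]\le\E_{\Omega,X}|G_t(X)-G^*_t(X)|\to 0$, so the second term vanishes in probability as $t\to\infty$. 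The triangle inequality then closes the joint limit $t,N\to\infty$.

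The hard part will be this second term: the consistency assumptions are only pointwise in $x$ and in probability, whereas I need convergence of the $p(x)$-integral of a \emph{random} integrand $G_t-G^*_t$, which forces the uniform-integrability step above. It is precisely here that the uniform boundedness of all plug-in quantities is indispensable---in particular the boundedness of $\hat{e}_{t-1}$, which is not literally stated (only its consistency is); I would either add it as a mild regularity condition analogous to $|\hat{f}_t|\le C_2$ in Theorem~\ref{thm:asymptotic_distribution_a3ipw}, or enforce it by truncating $\hat{e}_{t-1}$ at the known bound $C_2^2$ on $e^*$. A secondary subtlety is that $\sigma^{*2}_t$ is itself $\Omega_{t-1}$-measurable (through $\pi_t$), so every statement must be read as conditional-on-$\Omega_{t-1}$ convergence that is then integrated out, and the independence of $\mathcal{E}_N$ is exactly what makes this conditioning harmless.
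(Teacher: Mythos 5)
Your proof is correct and follows essentially the same route as the paper's: both decompose $g'_{t,N}-\sigma^{*2}_t$ into a plug-in error (controlled by the pointwise consistency of $\hat f_{t-1}$, $\hat e_{t-1}$, $\hat\theta_{t-1}$ as $t\to\infty$) and a sampling error over the independent evaluation covariates (controlled by the law of large numbers as $N\to\infty$), the only difference being that you pivot through the conditional population mean of the plug-in integrand while the paper pivots through the empirical mean of the true integrand. Your side remark that boundedness of $\hat e_{t-1}$ must be assumed (or enforced by truncation) for the bounded-convergence step is a fair observation; the paper relies on this implicitly without stating it.
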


In practice, we need to avoid having the estimator of $\sigma^{*2}_t$ be less than or equal to $0$. Thus, instead of $g'_{t, (1-r)T}$, we use the estimator $g_{t, N}$, defined as $g_{t, (1-r)T}=\max\big\{g'_{t, N}, \epsilon\big\}$, where $\epsilon > 0$ is the lower bound of the variance $\sigma^{*2}_t$. The estimator $g_{t, N}$ is consistent, as is $g'_{t, N}$. The value $\epsilon$ is assumed to be known, and we can use any sufficiently small value for $\epsilon$ because it is simply a technical term. The proof is shown in Appendix~\ref{appdx:consist_var}. Then, using $g_{t, N}$, we define the \emph{Feasible A3IPW} (FA3IPW) estimator $\hat{\theta}^{\mathrm{FA3IPW}}_{T}$ as $\left(\sum^{T}_{t=1}\frac{1}{\sqrt{g_{t, N}}}\right)^{-1}\sum^{T}_{t=1}\sum^{K}_{a=1}\frac{1}{\sqrt{g_{t, N}}}\left\{\frac{\epol(a\mid X_t)\mathbbm{1}[A_t=a]\big\{Y_t - \hat{f}_{t-1}(a, X_t)\big\}}{\pi_{t}(a\mid X_t, \Omega_{t - 1})} + \epol(a\mid x)\hat{f}_{t-1}(a, X_t)\right\}$. Similarly, we can also define the corresponding IPW estimator and refer to it as \emph{Feasible weighted Average Adaptive IPW estimator} (FA2daIPW). 

In some applications, we can obtain the dataset $\mathcal{E}_N = \big\{X_i\big\}^{N}_{i=1}$ as evaluation (test) data, which is the target of a new policy \citep{kato_uehara_2020}. On the other hand, by introducing a sample splitting, we can relax the assumption as the following remark.

\begin{remark}[Sample Splitting] 
\label{rem:sample_splitting}
To relax the access to $\mathcal{E}_N$, we consider splitting $\big\{(X_t, A_t, Y_t)\big\}^{T}_{t=1}$ into $\big\{(X_t, A_t, Y_t)\big\}^{\lfloor rT\rfloor}_{t=1}$ and $\big\{(X_t, A_t, Y_t)\big\}^{T}_{t=\lfloor rT\rfloor + 1}$, where $0 < r < 1$. In the following discussion, for simplicity, we assume that $rT$ is an integer (i.e., $\lfloor rT\rfloor=rT$). The samples $\big\{X_t\big\}^{T}_{t=rT}$ and $\big\{(X_t, A_t, Y_t)\big\}^{rT}_{t=1}$ are independent. Thus, to approximate $\sigma^2_t$, we can apply the law of large numbers to the sample average $g_{t, (1-r)T} = \max\left\{g'_{t, (1-r)T}, \epsilon\right\}$, where $g'_{t, (1-r)T}$ is defined as $\frac{1}{(1-r)T} \sum^T_{s = rT} \sum^{K}_{a=1}\left\{\frac{\big(\epol(a\mid X_s)\big)^2\big(\hat{e}_{t-1}(a, X_s) - \hat{f}^2_{t-1}(a, X_s)\big)}{\pi_{t}(a\mid X_s, \Omega_{t - 1})} + \Big(\epol(a\mid X_s)\hat{f}_{t-1}(a, X_s) - \hat{\theta}_{t-1} \Big)^2\right\}$. More details on sample splitting are provided in Appendix~\ref{appdx:sample_splitting}.
\end{remark}

\subsection{Asymptotic Properties of a FA3IPW Estimator}
Let $\hat{z}_t = \sum^{T}_{t=1}\frac{1}{\sqrt{g_{t, N}}}\left(\sum^{K}_{a=1}\left\{\frac{\epol(a\mid X_t)\mathbbm{1}[A_t=a]\big\{Y_t - \hat{f}_{t-1}(a, X_t)\big\}}{\pi_{t}(a\mid X_t, \Omega_{t - 1})} + \epol(a\mid X_t)\hat{f}_{t-1}(a, X_t)\right\} - \theta_0\right)$. We can show that $\hat{z}_t$ is an MDS as $\mathbb{E}\left[\hat{z}_t\mid \Omega_{t-1}\right] = \frac{1}{\sqrt{g_{t, N}}}\mathbb{E}\left[\ddot{z}_t\mid \Omega_{t-1}\right] = 0$.
Based on this property, the following theorems gives us the consistency and asymptotic normality of $\hat{\theta}^{\mathrm{FA3IPW}}_{T}$. First, we show the consistency of $\hat{\theta}^{\mathrm{FA3IPW}}_{T}$, i.e., $\hat{\theta}^{\mathrm{FA3IPW}}_{T} \xrightarrow{\mathrm{p}} \theta_0$.
\begin{theorem}[Consistency of the Proposed Estimator]
\label{thm:consitency} Suppose that there exist $C_2$ and $C_3$ such that $|\hat{f}_t|\leq C_2$ and $0 < 1/\sqrt{g_{t, N}} \leq C_3$. Then, under Assumptions~\ref{asm:DGP}, $\hat{\theta}^{\mathrm{FA3IPW}}_{T} \xrightarrow{\mathrm{p}} \theta_0$. 
\end{theorem}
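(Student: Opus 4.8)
The plan is to center the estimator and exhibit it as the ratio of a bounded martingale sum to a strictly positive weight sum that grows linearly in $T$. Write $w_t = 1/\sqrt{g_{t, N}}$, recall that $\ddot{z}_t = \psi_t(\theta_0; X_t, A_t, Y_t, \pi_t, \epol, \hat{f}_{t-1})$ is the AIPW score and that $\hat{z}_t = w_t\ddot{z}_t$, and note that the FA3IPW summand is $w_t(\ddot{z}_t + \theta_0)$. Subtracting $\theta_0 = (\sum_t w_t\theta_0)/(\sum_t w_t)$ gives
$$
\hat{\theta}^{\mathrm{FA3IPW}}_{T} - \theta_0 = \frac{\sum^T_{t=1} w_t\,\ddot{z}_t}{\sum^T_{t=1} w_t} = \frac{\sum^T_{t=1}\hat{z}_t}{\sum^T_{t=1} w_t}.
$$
It therefore suffices to show that the numerator is $\op(T)$ and that $(1/T)\sum^T_{t=1}w_t$ is bounded away from $0$, since then $|\hat{\theta}^{\mathrm{FA3IPW}}_{T} - \theta_0|$ is dominated by a constant multiple of $|(1/T)\sum_t\hat{z}_t|$.

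Next I would establish the two structural facts about $\{\hat{z}_t\}$. First, it is an MDS: the weight $w_t$ is a function of $\hat{f}_{t-1}, \hat{e}_{t-1}, \hat{\theta}_{t-1}$ (all measurable with respect to $\Omega_{t-1}$) and of $\mathcal{E}_N$, which is independent of $\mathcal{S}_T$ and carries no period-$t$ information; hence conditioning on the filtration enlarged by $\mathcal{E}_N$ leaves $w_t$ constant and $\mathbb{E}[\hat{z}_t \mid \Omega_{t-1}] = w_t\,\mathbb{E}[\ddot{z}_t \mid \Omega_{t-1}] = 0$, as already recorded before the statement. Second, $\hat{z}_t$ is uniformly bounded: Assumption~\ref{asm:DGP} gives $\epol(a\mid x)/\pi_t(a\mid x, \Omega_{t-1}) \leq C_1$ and $|Y_t| \leq C_2$, so together with $|\hat{f}_t| \leq C_2$ the score $\ddot{z}_t$ is bounded by a constant $B_z$, and with $w_t \leq C_3$ we get $|\hat{z}_t| \leq C_3 B_z =: B$.

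Boundedness and the MDS property then deliver the numerator bound either directly through the weak law of large numbers for an MDS of Proposition~\ref{prp:mrtgl_WLLN}, or via orthogonality: for $s \neq t$, $\mathbb{E}[\hat{z}_s\hat{z}_t] = 0$, so $\mathbb{E}[(\sum^T_{t=1}\hat{z}_t)^2] = \sum^T_{t=1}\mathbb{E}[\hat{z}_t^2] \leq B^2 T$, whence $(1/T)\sum^T_{t=1}\hat{z}_t \xrightarrow{\mathrm{p}} 0$ by Chebyshev's inequality. For the denominator, the truncation $g_{t, N} = \max\{g'_{t, N}, \epsilon\}$ bounds $g_{t,N}$ below by $\epsilon$, while boundedness of $g'_{t,N}$ bounds it above by some $M$, so $w_t \geq 1/\sqrt{M} =: c > 0$ and $(1/T)\sum^T_{t=1}w_t \geq c$; combining the two gives $\hat{\theta}^{\mathrm{FA3IPW}}_{T} \xrightarrow{\mathrm{p}} \theta_0$. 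The main obstacle I anticipate is the random, data-dependent denominator: one must be careful that the weights are measurable with respect to the enlarged filtration so that the MDS structure survives, and that $g_{t,N}$ is bounded \emph{above} (so the denominator grows like $T$ rather than collapsing), which in turn requires a uniform bound on $\hat{e}_{t-1}$ beyond the hypotheses stated in the theorem.
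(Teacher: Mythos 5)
Your proposal is correct and follows essentially the same route as the paper: the paper's one-line proof likewise writes $\hat{\theta}^{\mathrm{FA3IPW}}_{T}-\theta_0$ as the weighted average of the MDS $\{\hat{z}_t\}$ and invokes the weak law of large numbers for martingales (Proposition~\ref{prp:mrtgl_WLLN}) via boundedness, which is your first argument; your Chebyshev/orthogonality alternative is just a self-contained version of the same step. Your closing observation is well taken: the paper's proof silently assumes $\bigl(1/T\bigr)\sum_t 1/\sqrt{g_{t,N}}$ is bounded away from zero, which needs an upper bound on $g_{t,N}$ (hence boundedness of $\hat{e}_{t-1}$) not literally among the stated hypotheses, so your treatment of the denominator fills a gap the paper leaves implicit.
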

\begin{proof} From the boundedness, we can apply the law of large numbers for an MDS (Proposition~\ref{prp:mrtgl_WLLN} in Appendix~\ref{appdx;prelim}). Therefore, $\frac{1}{T}\sum^{T}_{t=1}\hat{z}_t \xrightarrow{\mathrm{p}} 0$.
\end{proof}
The following theorem shows the asymptotic normality of $\hat{\theta}^{\mathrm{FA3IPW}}_T$. The proof is in Appendix~\ref{appdx:proof_thm:main}.
\begin{theorem}[Asymptotic Distribution of the Proposed Estimator]
\label{thm:main}
Suppose that
\begin{description}
\item[(i)] there exist $C_2$ and $C_3$ such that $|\hat{f}_t|\leq C_2$ and $0 < 1/\sqrt{g_{t, N}} \leq C_3$;
\item[(ii)] Pointwise convergence in probability of $\hat{f}_t$, i.e., $\forall x \in \mathcal{X}$, $\hat{f}_t(a, x)- f^*(a, x)\xrightarrow{\mathrm{p}}0$ as $t\to \infty$;
\item[(ii)] $g_{t, N}- \sigma^{*2}_t\xrightarrow{\mathrm{p}}0$ as $t\to \infty$ and $N\to \infty$.
\end{description}
Then, under Assumptions~\ref{asm:DGP}, $\left(\frac{1}{\sqrt{T}}\sum^{T}_{t=1}\frac{1}{\sqrt{g_{t, N}}}\right)\left(\hat{\theta}^{\mathrm{FA3IPW}}_{T} - \theta_0\right) \xrightarrow{\mathrm{d}} \mathcal{N}\left(0, 1\right)$ as $T, N\to \infty$.
\end{theorem}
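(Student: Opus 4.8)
The plan is to apply the martingale CLT of Proposition~\ref{prp:marclt} to the standardized sequence $\hat z_t = \ddot z_t/\sqrt{g_{t,N}}$, where $\ddot z_t = \psi_t(\theta_0; X_t, A_t, Y_t, \pi_t, \epol, \hat f_{t-1})$. First I would record the algebraic reduction
\[
\left(\frac{1}{\sqrt T}\sum_{t=1}^T\frac{1}{\sqrt{g_{t,N}}}\right)\left(\hat\theta^{\mathrm{FA3IPW}}_T-\theta_0\right)=\frac{1}{\sqrt T}\sum_{t=1}^T \hat z_t,
\]
which holds because the normalizer $\sum_t g_{t,N}^{-1/2}$ in $\hat\theta^{\mathrm{FA3IPW}}_T$ cancels and $\psi_t(\theta_0;\cdot)=(\text{score}_t)-\theta_0=\ddot z_t$. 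Since $g_{t,N}$ is measurable with respect to $\Omega_{t-1}$ together with the independent evaluation data, and $\mathbb E[\ddot z_t\mid\Omega_{t-1}]=0$, the sequence $\{\hat z_t\}$ is an MDS. It then remains to verify conditions (a)--(c) of Proposition~\ref{prp:marclt} with limiting variance $\nu^2=1$.

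Condition (b) is immediate: by Assumption~\ref{asm:DGP} and $|\hat f_t|\le C_2$ the score $\ddot z_t$ is uniformly bounded, and with $g_{t,N}^{-1/2}\le C_3$ the $\hat z_t$ are uniformly bounded, so all moments are finite. The bulk of the work is condition (c), $\frac1T\sum_t\hat z_t^2\xrightarrow{\mathrm p}1$. I would split
\[
\frac1T\sum_{t=1}^T\hat z_t^2=\frac1T\sum_{t=1}^T\Big(\hat z_t^2-\mathbb E[\hat z_t^2\mid\Omega_{t-1}]\Big)+\frac1T\sum_{t=1}^T\mathbb E[\hat z_t^2\mid\Omega_{t-1}].
\]
The first term is the average of a bounded MDS and vanishes in probability by the martingale weak law of large numbers (Proposition~\ref{prp:mrtgl_WLLN}). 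Writing $\sigma_t^2=\mathrm{Var}(\ddot z_t\mid\Omega_{t-1})$ gives $\mathbb E[\hat z_t^2\mid\Omega_{t-1}]=\sigma_t^2/g_{t,N}$, so the claim reduces to $\frac1T\sum_t\sigma_t^2/g_{t,N}\xrightarrow{\mathrm p}1$.

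The heart of the argument, and the step I expect to be the main obstacle, is showing $\sigma_t^2/g_{t,N}\to1$. Here the subtlety is that $g_{t,N}$ targets the variance $\sigma^{*2}_t$ built from the \emph{true} $f^*,e^*,\theta_0$, whereas $\sigma_t^2$ is the true conditional variance of $\ddot z_t$, which is built from $\hat f_{t-1}$. Computing the conditional second moment explicitly (integrating over $A_t\sim\pi_t$ and $Y_t$, then over $X_t$) shows that $\sigma_t^2-\sigma^{*2}_t$ is an expectation of terms each quadratic in $f^*(a,X_t)-\hat f_{t-1}(a,X_t)$: the residual second moment equals $v^*(a,X_t)+(f^*-\hat f_{t-1})^2(a,X_t)$, and the mean of the inverse-probability part contributes a further square $\big(\sum_a\epol(a\mid X_t)(f^*-\hat f_{t-1})\big)^2$. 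By the pointwise consistency of $\hat f_t$ and bounded convergence, $\sigma_t^2-\sigma^{*2}_t\xrightarrow{\mathrm p}0$; meanwhile Theorem~\ref{thm:consist_var} gives $g_{t,N}-\sigma^{*2}_t\xrightarrow{\mathrm p}0$. The lower bound $g_{t,N}\ge\epsilon>0$ from assumption (i) keeps the ratio well defined and forces $\sigma^{*2}_t\ge\epsilon$ in the limit, so $\sigma_t^2/g_{t,N}\to1$.

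Finally I would upgrade this pointwise-in-$t$ statement to the Cesàro average: since the summands are uniformly bounded, convergence in probability to $1$ gives $L^1$ convergence of each term, and a Cesàro argument yields $\frac1T\sum_t\sigma_t^2/g_{t,N}\to1$, establishing (c). Condition (a) follows identically by taking expectations, $\nu_t^2=\mathbb E[\sigma_t^2/g_{t,N}]\to1$ with $\nu_t^2>0$ (as $\sigma_t^2>0$), and Cesàro gives $\frac1T\sum_t\nu_t^2\to1$. Proposition~\ref{prp:marclt} then delivers $\frac{1}{\sqrt T}\sum_t\hat z_t\xrightarrow{\mathrm d}\mathcal N(0,1)$, which is the assertion. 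The one delicate bookkeeping point is that $g_{t,N}\to\sigma^{*2}_t$ holds only as both $t$ and $N$ grow, so the Cesàro passages must be carried out along the joint limit $T,N\to\infty$ rather than by iterating the two limits.
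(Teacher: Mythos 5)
Your proposal follows essentially the same route as the paper's proof: reduce the claim to $\frac{1}{\sqrt T}\sum_t \hat z_t \xrightarrow{\mathrm d}\mathcal N(0,1)$ via the MDS CLT, verify condition (b) by boundedness, and verify condition (c) by splitting into a centered part handled by the martingale weak law of large numbers and a conditional-variance part driven to $1$ by the consistency of $g_{t,N}$ and $\hat f_{t-1}$ together with boundedness (uniform integrability) and a Ces\`aro passage. The only cosmetic difference is that you phrase the variance step as a ratio $\sigma_t^2/g_{t,N}\to 1$ with an explicit expansion of $\sigma_t^2-\sigma^{*2}_t$ in squares of $f^*-\hat f_{t-1}$, whereas the paper bounds the difference of second moments directly by $\mathbb E\bigl[\bigl|g_{t,N}^{-1/2}-(\sigma^{*2}_t)^{-1/2}\bigr|\bigr]\tilde C_1+\mathbb E\bigl[|\hat f_{t-1}-f^*|\bigr]\tilde C_2$ and invokes the $L^r$ convergence theorem; the substance is the same.
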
 

\begin{remark}[Difference of Assumption between the Consistency and Asymptotic Normality]
The asymptotic normality requires $g_{t, N}- \sigma^{2*}_t\xrightarrow{\mathrm{p}}0$. However, we can obtain the consistency $\hat{\theta}^{\mathrm{A3IPW}}_T \xrightarrow{\mathrm{p}} \theta_0$ without assuming $g_{t, N}- \sigma^{*2}_t\xrightarrow{\mathrm{p}}0$. From this property, we can obtain the consistent estimator of the variance, $\sigma^{*2}_t$, which requires the consistent estimator of the policy value, $\theta_0$. 
\end{remark}

\begin{remark}[Heuristic Stabilization]
\label{rem:heur}
The proposed estimator requires a sequence of $\big\{f_t\big\}^{T-1}_{t=0}$. Because the early estimator of $f_t$ may be unstable due to the insufficient amount of training data, some heuristic stabilizations may improve its performance. \citet{1911.02768} proposed a similar estimator and suggested using a weight sequence for stabilization. In the setting of \citet{Kato2020}, they can change $p_t(a\mid x, \Omega_{t-1})$ and proposed adjusting the probability to stabilize their proposed estimator. In this study, we take a simple approach by suggesting discarding some samples at first and constructing the proposed estimator without them. We call this estimator the \emph{stabilized FA3IPW} (SFA3IPW) estimator. In some experiments, this approach improves the estimator's performance. 
\end{remark}

\subsection{Efficiency of A3IPW Estimator}
In the problem setting, it is not easy to discuss an efficiency of an estimator. For simplicity, we consider the lower bound of the variance of an estimator within a class of estimators such that $\hat{\theta}_T = \sum^{T}_{t=1}w_t\tilde{\psi}_t(X_t, A_t, Y_t, \pi_t)$, where $\tilde{\psi}$ is a score function and $w_t$ is a weight satisfying $\sum^T_{t=1}w_t = 1$. Let us note that we usually use $w_t=1/T$ in the case where samples are i.i.d. First, we consider the variance of $\tilde{\psi}_t$. In OPE, the lowest variance is given as $\mathbb{E}\left[\sum^{K}_{a=1}\left\{\frac{\big(\epol(a\mid X_t)\big)^2v^*(a, X_t)}{\pi_t(a\mid X, \Omega_{t-1})} + \Big(\epol(a\mid X_t)f^*(a, X_t) - \theta_0\Big)^2\right\}\right]$. In the proposed estimator, each element of $\{\tilde{\psi}_t\}^{T}_{t=1}$ achieves this variance. In addition, it is well known that for a sequence of positive values $\phi_t = \mathbb{E}[\tilde{\psi}^2_t \mid \Omega_{t-1}]$, the weight $w_t =\frac{1}{\sqrt{\phi_t}}/\sum^{T}_{i=1}\frac{1}{\sqrt{\phi_i}}$ minimizes the variance of $\hat{\theta}_T$ when the covariances among the score functions are $0$. When $\{\phi_t\}^T_{t=1}$ is an MDS, the covariance is $0$. Thus, the proposed estimator uses the weights and scores that minimize the variance for an estimator with a form that satisfies $\sum^{T}_{t=1}w_t\tilde{\psi}_t(X_t, A_t, Y_t, \pi_t)$.

\section{Main Algorithm: Two-step FA3IPW Estimation}
\label{sec:main_algorithm}
Under the assumption of $g_{t, N}- \sigma^{*2}_t\xrightarrow{\mathrm{p}}0$, the proposed estimator $\hat{\theta}^{\mathrm{A3IPW}}_T$ has the asymptotic normality. However, to estimate $\sigma^{*2}_t = \mathrm{Var}\big(\ddot{z}_t\mid \Omega_{t-1}\big)$, a consistent estimator of $\theta_0$ is required, which is why we use $g_{t, N}$. However, we can obtain a consistent estimator of $\theta_0$ without imposing the assumption that $g_{t, N} - \sigma^{*2}_t\xrightarrow{\mathrm{p}} 0$. Based on these properties, we propose \emph{two-step estimation}. First, using arbitrary appropriate values for $g_{t, N} = \tilde{g}_{t}$ for $t=1,2,\dots,T-1$, such as $\tilde{g}_{t}=1$, we obtain the initial estimates, $\left\{\tilde{\theta}_t\right\}^{T-1}_{t=1}$. Then, using $\tilde{\theta}_t \xrightarrow{\mathrm{p}} \theta_0$, we construct $g_{t, N} - \sigma^{*2}_t \xrightarrow{\mathrm{p}} 0$. Finally, we can estimate the next estimator of $\theta_0$, $\hat{\theta}^{\mathrm{TSFA3IPW}}_{T}$, with asymptotic normality from Theorem~\ref{thm:main}. We summarize the process of \emph{two-step FA3IPW} (TSFA3IPW) estimation in Algorithm~\ref{alg}. 

\begin{algorithm}[tb]
   \caption{TSFA3IPW Estimation}
   \label{alg}
\begin{algorithmic}
   \STATE For $t=1,2,\dots, T$, let $q_t = \sum^{K}_{a=1}\left\{\frac{\epol(a\mid X_t)\mathbbm{1}[A_t=a]\big\{Y_t - \hat{f}_{t-1}(a, X_t)\big\}}{\pi_{t}(a\mid X_t, \Omega_{t - 1})} + \epol(a\mid X_t)\hat{f}_{t-1}(a, X_t)\right\}$. 
   \STATE {\bfseries Initialization:} Let $\tilde g_{t}$ be $1$ or some other positive value for $t=1,2,\dots,T-1$.
   \STATE Obtain $\left\{\tilde{\theta}_t\right\}^{T-1}_{t=1}$ using $\left\{\tilde{g}_{t}\right\}^{T-1}_{t=1}$ by $\tilde{\theta}_t = \left(\sum^{t}_{s=1}\frac{1}{\sqrt{\tilde{g}_{s}}}\right)^{-1}\sum^t_{s=1}\frac{1}{\sqrt{\tilde{g}_{s}}}q_s$.
   \STATE Obtain $\left\{g_{t, N}\right\}^{T}_{t=1}$ using $\left\{\tilde{\theta}_t\right\}^{T-1}_{t=1}$, an appropriate value $\hat{\theta}^{(1)}_0$ such as $\hat{\theta}^{(1)}_0 = 0$, and $\mathcal{E}_N$.
   \STATE Obtain $\hat \theta^{\mathrm{TSFA3IPW}}_{T}$ using $\left\{g_{t, N}\right\}^{T}_{t=1}$ by $\hat \theta^{\mathrm{TSFA3IPW}}_{T} = \left(\sum^{T}_{s=1}\frac{1}{\sqrt{g_{s, N}}}\right)^{-1}\sum^{T}_{s=1}\frac{1}{\sqrt{g_{s, N}}}q_s$.
\end{algorithmic}
\end{algorithm}  

\begin{remark}[Estimation of Functions \texorpdfstring{$f^*_t$}{f} and \texorpdfstring{$e^*_t$}{g}]
Several nonparametric regression estimators can be proved to be consistent for dependent samples via certain bandit algorithms \citep{yang2002}.
\end{remark}

\begin{table*}[t]

\begin{center}
\caption{Experimental results under the RW policy. The method with the lowest MSE is in bold.} 
\medskip
\label{tbl:exp_table1}
\scalebox{0.65}[0.65]{
\begin{tabular}{l|rr|rr|rr|rr|rr|rr|rr|rr}
\toprule
Datasets &  \multicolumn{2}{c|}{satimage}& \multicolumn{2}{c|}{pendigits}& \multicolumn{2}{c|}{mnist}& \multicolumn{2}{c|}{dna}& \multicolumn{2}{c|}{letter}& \multicolumn{2}{c|}{sensorless}& \multicolumn{2}{c|}{connect-4}& \multicolumn{2}{c}{covtype} \\
Metrics &      MSE &      SD &      MSE &      SD &      MSE &      SD &      MSE &      SD &      MSE &      SD &     MSE &     SD &      MSE &     SD &     MSE &     SD \\
\hline
FA3IPW &  {\bf 0.037} &  0.193 &  0.064 &  0.254 &  {\bf 0.048} &  0.219 &  0.032 &  0.178 &  0.128 &  0.358 &  0.101 &  0.313 &  {\bf 0.029} &  0.163 &  0.045 &  0.210 \\
SFA3IPW &  {\bf 0.037} &  0.192 &  {\bf 0.061} &  0.243 &  0.051 &  0.225 &  {\bf 0.029} &  0.169 &  0.142 &  0.376 &  {\bf 0.096} &  0.308 &  0.033 &  0.178 &  {\bf 0.042} &  0.206 \\
DM &  0.101 &  0.168 &  0.456 &  0.171 &  0.281 &  0.179 &  0.209 &  0.122 &  0.489 &  0.143 &  0.413 &  0.153 &  0.100 &  0.158 &  0.109 &  0.165 \\
AdaIPW &  0.068 &  0.261 &  0.089 &  0.297 &  0.069 &  0.263 &  0.063 &  0.251 &  0.129 &  0.359 &  0.133 &  0.360 &  0.089 &  0.242 &  0.070 &  0.252 \\
A2IPW &  0.038 &  0.193 &  0.063 &  0.250 &  0.049 &  0.220 &  0.035 &  0.185 &  {\bf 0.119} &  0.345 &  0.100 &  0.314 &  0.032 &  0.164 &  0.052 &  0.226 \\
FA2daIPW &  0.068 &  0.261 &  0.090 &  0.301 &  0.074 &  0.273 &  0.057 &  0.238 &  0.140 &  0.373 &  0.133 &  0.358 &  0.075 &  0.237 &  0.059 &  0.236 \\
\bottomrule
\end{tabular}
} 
\end{center}

\begin{center}
\caption{Experimental results under the UCB policy. The method with the lowest MSE is in bold.} 
\medskip
\label{tbl:exp_table2}
\scalebox{0.65}[0.65]{
\begin{tabular}{l|rr|rr|rr|rr|rr|rr|rr|rr}
\toprule
Datasets &  \multicolumn{2}{c|}{satimage}& \multicolumn{2}{c|}{pendigits}& \multicolumn{2}{c|}{mnist}& \multicolumn{2}{c|}{dna}& \multicolumn{2}{c|}{letter}& \multicolumn{2}{c|}{sensorless}& \multicolumn{2}{c|}{connect-4}& \multicolumn{2}{c}{covtype} \\
Metrics &      MSE &      SD &      MSE &      SD &      MSE &      SD &      MSE &      SD &      MSE &      SD &     MSE &     SD &      MSE &     SD &     MSE &     SD \\
\hline
FA3IPW &  0.041 &  0.201 &  0.095 &  0.308 &  0.098 &  0.306 &  0.026 &  0.160 &  {\bf 0.452} &  0.657 &  0.077 &  0.274 &   3.484 &  1.798 &  0.066 &  0.257 \\
SFA3IPW &  0.042 &  0.198 &  {\bf 0.071} &  0.263 &  0.126 &  0.353 &  0.029 &  0.167 &  0.488 &  0.679 &  {\bf 0.070} &  0.262 &   5.248 &  2.240 &  0.077 &  0.275 \\
DM &  {\bf 0.022} &  0.089 &  0.218 &  0.114 &  0.245 &  0.168 &  0.195 &  0.130 &  0.489 &  0.144 &  0.401 &  0.141 &   {\bf 0.112} &  0.201 &  0.107 &  0.172 \\
AdaIPW &  0.078 &  0.275 &  0.099 &  0.314 &  0.188 &  0.423 &  0.051 &  0.223 &  0.891 &  0.929 &  0.101 &  0.315 &  11.424 &  2.560 &  0.072 &  0.267 \\
A2IPW &  0.043 &  0.206 &  0.084 &  0.290 &  {\bf 0.097} &  0.304 &  {\bf 0.025} &  0.158 &  0.854 &  0.909 &  0.083 &  0.286 &   2.363 &  1.400 &  {\bf 0.060} &  0.245 \\
FA2daIPW &  0.072 &  0.263 &  0.124 &  0.351 &  0.173 &  0.398 &  0.052 &  0.225 &  0.481 &  0.680 &  0.094 &  0.304 &  11.031 &  2.628 &  0.079 &  0.281 \\
\bottomrule
\end{tabular}
} 
\end{center}

\end{table*}

\section{Experiments}
\label{sec:exp}
In this section, using benchmark datasets, we demonstrate the effectiveness of the proposed FA3IPW estimator and the FA3IPW estimator with discarding some early samples for stabilization (SFA3IPW estimator in Remark~\ref{rem:heur}). Following \citet{dudik2011doubly} and \citet{Chow2018}, we evaluate the proposed estimators using classification datasets by transforming them into contextual bandit data. From the LIBSVM repository, we use the satImage, pendigits, mnist, dna, letter, sensorless, connect-4, and covtype datasets \footnote{\url{https://www.csie.ntu.edu.tw/~cjlin/libsvmtools/datasets/}}. First, we create an adaptive policy $\pi^a_t$ by following a random walk and perform upper confidence bound (UCB) algorithms in the MAB problem. Then, we construct a behavior policy as a mixture of $\pi^a_t$ and the uniform random policy $\pi^u_t$, defined as $\pi_t = 0.7\pi^a_t+0.3\pi^u_t$ \citep{Kallus2019IntrinsicallyES}. When we use the RW policy, the policy $\pi^a_t$ does not converge. In contrast, when we use the UCB policy to generate $\pi^a_t$, the policy converges to a time-invariant policy. To construct an evaluation policy, we create a deterministic policy $\pi_d$ by training a logistic regression classifier on historical data. Then, we construct the evaluation policy $\epol$ as a mixture of $\pi^d$ and the uniform random policy $\pi^u$, defined as $\epol = 0.7\pi^d+0.3\pi^u$. Through experiments, the behavior policy $\pi_t$ is assumed to be known. More details are in Appendix~\ref{appdx:det_exp}.

We compare the MSEs of six estimators, the FA3IPW, SFA3IPW, DM, AdaIPW, A2IPW, and FA2daIPW estimators. For the FA3IPW, SFA3IPW, and FA2daIPW estimators, we applied the two-step estimation. In each experiment, we have access to historical data with $T=1000$ and evaluation data with $N=1000$. With the SFA3IPW estimator, we discard the first $1000$ samples of the historical data. When estimating $f^*$ and $e^*$, we use the Nadaraya-Watson regression (NW) estimator, which is used in \citet{yang2002}. The resulting MSEs and their standard deviations (SDs) over $20$ replications of each experiment are shown in Tables~\ref{tbl:exp_table1} and \ref{tbl:exp_table2}. In the experimental results, when the RW policy is used, the FA3IPW and SFA3IPW estimators show preferable performances compared with the A2IPW estimator. This is because the performance of the A2IPW estimator is guaranteed only when the policy converges to a time-invariant policy. On the other hand, when the UCB policy is used, the A2IPW estimator also works well as we can expect. From the results, we recommend using the FA3IPW estimator when the policy does not converge. More importantly, in the case, we can construct confidence intervals from the FA3IPW estimator, but cannot construct it from the A2IPW estimator. In Appendix~\ref{appdx:det_exp}, we show the experimental results with other settings. 

\section{Conclusion}
This study presented solutions for causal inference from dependent samples obtained via bandit feedback. By utilizing evaluation data or sample splitting, we can standardize the variance of an MDS. Then, we can apply the CLT for an MDS to obtain the asymptotic normality of the proposed estimator. In experiments, the proposed estimators showed theoretically expected performances.

\section*{Broader Impact}
In various applications, such as A/B testing and clinical trials, the problem of causal inference from dependent samples arises. Because this study promotes the use of OPE for those applications by presenting an effective solution to this problem, the ethical issues related to such applications will become more critical. For example, in clinical trials, the proposed method also should take the criteria proposed by the US Food and Drug Administration \citep{fda} into account. 

Besides, although this study solved an essential problem with OPE, the limitations of methods for OPE should still be recognized. The existing methods for OPE try to find causality only from data without requiring domain-specific knowledge, but they sometimes lead to inappropriate decision making by producing incorrect results. When finding causality in the real world, it might be desirable to utilize domain-specific knowledge combined with the results obtained from machine learning methods. 

\bibliographystyle{icml2019}
\bibliography{arXiv.bbl}

\clearpage

\appendix

\section{Preliminaries}
\label{appdx;prelim}

\subsection{Mathematical Tools}

\begin{proposition}\label{prp:rules_for_ld}[Slutsky Theorem, \citet{greene2003econometric}, Theorem D.~16 1, p.~1117]  If $a_n\xrightarrow{d}a$ and $b_n\xrightarrow{p} b$, then
\begin{align*}
a_nb_n\xrightarrow{d} ba.
\end{align*}
\end{proposition}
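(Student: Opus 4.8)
The plan is to reduce the claim to two standard ingredients — the continuous mapping theorem and the fact that an additive $\op(1)$ perturbation does not disturb a weak limit — while exploiting throughout the one structural feature that makes the statement true, namely that the limit $b$ is a \emph{constant}. First I would record the decomposition $a_n b_n = b\,a_n + a_n(b_n - b)$. The leading term is immediate: since $a_n \xrightarrow{d} a$ and the map $x \mapsto bx$ is continuous, the continuous mapping theorem gives $b\,a_n \xrightarrow{d} ba$.

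Next I would control the cross term $a_n(b_n - b)$ and show it is asymptotically negligible. Because $a_n \xrightarrow{d} a$, the sequence $\{a_n\}$ is tight, i.e. $a_n = \Op(1)$, while by hypothesis $b_n - b \xrightarrow{p} 0$, i.e. $b_n - b = \op(1)$. The product of a tight sequence and an $\op(1)$ sequence is itself $\op(1)$: for any $\epsilon,\eta>0$ choose $M$ with $\sup_n \mathbb{P}(|a_n|>M)<\eta$, and then $\mathbb{P}(|a_n(b_n-b)|>\epsilon)\le \mathbb{P}(|a_n|>M)+\mathbb{P}(|b_n-b|>\epsilon/M)$, whose $\limsup$ is at most $\eta$; letting $\eta \downarrow 0$ gives $a_n(b_n-b)\xrightarrow{p}0$.

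Finally I would glue the pieces together with the additive-perturbation lemma: if $U_n\xrightarrow{d}U$ and $V_n\xrightarrow{p}0$, then $U_n+V_n\xrightarrow{d}U$. I would establish this by sandwiching distribution functions — at any continuity point $t$ of the limit and any $\epsilon>0$ one has $\mathbb{P}(U_n+V_n\le t)\le \mathbb{P}(U_n\le t+\epsilon)+\mathbb{P}(|V_n|>\epsilon)$ together with the matching lower bound $\mathbb{P}(U_n+V_n\le t)\ge \mathbb{P}(U_n\le t-\epsilon)-\mathbb{P}(|V_n|>\epsilon)$ — then sending $n\to\infty$ and $\epsilon\downarrow 0$. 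Applying this with $U_n=b\,a_n$ and $V_n=a_n(b_n-b)$ yields $a_n b_n\xrightarrow{d}ba$.

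The main obstacle is really the cross term, and behind it the fact that the conclusion genuinely needs $b$ to be deterministic: if $b$ were a non-degenerate random variable, the marginal convergences of $a_n$ and $b_n$ would not determine the joint law of $(a_n,b_n)$ and the product limit could fail. The argument above sidesteps this precisely because $b_n-b\xrightarrow{p}0$ concentrates the second coordinate, which is what makes the tightness-times-$\op(1)$ bound available. An equivalent and more conceptual packaging, which I would mention as an alternative, is to first upgrade to joint convergence $(a_n,b_n)\xrightarrow{d}(a,b)$ (valid exactly because $b$ is a constant) and then apply the continuous mapping theorem to the multiplication map $(x,y)\mapsto xy$.
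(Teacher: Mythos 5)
The paper does not prove this proposition at all: it is imported verbatim from \citet{greene2003econometric} as a black-box tool, so there is no in-paper argument to compare yours against. Your proof is a correct and entirely standard derivation of the product form of Slutsky's theorem: the decomposition $a_nb_n = b\,a_n + a_n(b_n-b)$, the continuous mapping theorem for the first term, tightness of $\{a_n\}$ (which follows from $a_n\xrightarrow{d}a$) combined with $b_n-b\xrightarrow{p}0$ to kill the cross term, and the additive-perturbation lemma to conclude. Your observation that the result genuinely requires $b$ to be a deterministic constant -- which the paper's statement leaves implicit -- is apt, since without it the marginal laws do not pin down the limit of the product. The only detail left slightly informal is in the sandwich step, where $t+\epsilon$ and $t-\epsilon$ should be taken along continuity points of the limit law (possible since such points are dense) before sending $\epsilon\downarrow 0$; this is routine and does not affect correctness.
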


\begin{definition}\label{dfn:uniint}[Uniformly Integrable, \citet{GVK126800421}, p.~191]  A sequence $\{A_t\}$ is said to be uniformly integrable if for every $\epsilon > 0$ there exists a number $c>0$ such that 
\begin{align*}
\mathbb{E}[|A_t|\cdot \mathbbm{1}[|A_t \geq c|]] < \epsilon
\end{align*}
for all $t$.
\end{definition}

\begin{proposition}\label{prp:suff_uniint}[Sufficient Conditions for Uniformly Integrable, \citet{GVK126800421}, Proposition~7.7, p.~191]  (a) Suppose there exist $r>1$ and $M<\infty$ such that $\mathbb{E}[|A_t|^r]<M$ for all $t$. Then $\{A_t\}$ is uniformly integrable. (b) Suppose there exist $r>1$ and $M < \infty$ such that $\mathbb{E}[|b_t|^r]<M$ for all $t$. If $A_t = \sum^\infty_{j=-\infty}h_jb_{t-j}$ with $\sum^\infty_{j=-\infty}|h_j|<\infty$, then $\{A_t\}$ is uniformly integrable.
\end{proposition}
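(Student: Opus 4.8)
The plan is to prove part (a) directly from Definition~\ref{dfn:uniint} and then to deduce part (b) by reducing it to (a). For part (a), I would start from the elementary observation that on the event $\{|A_t|\geq c\}$ one has $(|A_t|/c)^{r-1}\geq 1$ because $r-1>0$, and hence $|A_t|\leq |A_t|^r/c^{r-1}$ pointwise on that event. Multiplying by the indicator and taking expectations yields
\begin{align*}
\E\big[|A_t|\,\mathbbm{1}[|A_t|\geq c]\big] \leq c^{-(r-1)}\,\E\big[|A_t|^r\,\mathbbm{1}[|A_t|\geq c]\big] \leq c^{-(r-1)}\,\E\big[|A_t|^r\big] < c^{-(r-1)} M.
\end{align*}
Since $r>1$, the factor $c^{-(r-1)}$ tends to $0$ as $c\to\infty$, so given any $\epsilon>0$ I can choose $c$ large enough (for instance $c>(M/\epsilon)^{1/(r-1)}$) to force the right-hand side below $\epsilon$. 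The key point is that the bound $M$ is uniform in $t$, so the same $c$ works for every $t$, which is exactly the uniformity demanded by the definition.

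For part (b), my strategy is to show that the moment hypothesis of part (a) is inherited by $A_t$, after which (a) closes the argument. Writing $H=\sum_{j=-\infty}^{\infty}|h_j|<\infty$, I would first check that the series $A_t=\sum_j h_j b_{t-j}$ is well defined as an $L^r$ limit: since $\|b_{t-j}\|_r\leq M^{1/r}$ uniformly and $\sum_j|h_j|<\infty$, the partial sums are Cauchy in $L^r$ and therefore converge. Then, by Minkowski's inequality applied to the absolutely convergent sum, $\|A_t\|_r\leq \sum_j |h_j|\,\|b_{t-j}\|_r \leq H M^{1/r}$, giving $\E[|A_t|^r]\leq H^r M$ for all $t$. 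Equivalently, normalizing the weights to $\lambda_j=|h_j|/H$ (so that $\sum_j\lambda_j=1$) and applying Jensen's inequality to the convex map $x\mapsto|x|^r$ produces the same bound $\E[|A_t|^r]\leq H^r\sum_j\lambda_j\,\E[|b_{t-j}|^r]\leq H^r M$. Either route shows that $\{A_t\}$ satisfies the hypothesis of part (a) with the uniform bound $H^r M$, and part (a) then delivers uniform integrability.

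The only genuinely delicate step is the well-definedness and the $L^r$ estimate of the two-sided infinite sum in part (b); the rest is the standard truncation/Markov computation of part (a). I would handle this obstacle by combining the uniform $L^r$ bound on $\{b_t\}$ with summability of the weights to obtain Cauchyness of the partial sums in $L^r$, and then pass the $L^r$ norm through the sum via Minkowski's inequality, which is legitimate precisely because $\sum_j|h_j|<\infty$.
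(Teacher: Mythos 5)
Your argument is correct: part (a) is the standard Markov/truncation estimate, and part (b) correctly reduces to (a) via the uniform bound $\mathbb{E}[|A_t|^r]\leq\big(\sum_j|h_j|\big)^r M$ obtained from Minkowski's (or Jensen's) inequality, with the $L^r$-Cauchy argument properly justifying the two-sided infinite sum. Note that the paper itself gives no proof of this proposition --- it is quoted verbatim from the cited reference as a preliminary tool --- so there is nothing to compare against; your proof is the standard textbook one and fills that gap correctly (the only unhandled case is the trivial $\sum_j|h_j|=0$ in the Jensen route, where $A_t\equiv 0$).
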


\begin{proposition}[$L^r$ Convergence Theorem, \citet{loeve1977probability}]
\label{prp:lr_conv_theorem}
Let $0<r<\infty$, suppose that $\mathbb{E}\big[|a_n|^r\big] < \infty$ for all $n$ and that $a_n \xrightarrow{\mathrm{p}}a$ as $n\to \infty$. The following are equivalent: 
\begin{description}
\item{(i)} $a_n\to a$ in $L^r$ as $n\to\infty$;
\item{(ii)} $\mathbb{E}\big[|a_n|^r\big]\to \mathbb{E}\big[|a|^r\big] < \infty$ as $n\to\infty$; 
\item{(iii)} $\big\{|a_n|^r, n\geq 1\big\}$ is uniformly integrable.
\end{description}
\end{proposition}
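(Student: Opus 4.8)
The plan is to establish the three-way equivalence via the cycle $(i)\Rightarrow(ii)\Rightarrow(iii)\Rightarrow(i)$, using throughout that $a_n\xrightarrow{\mathrm{p}}a$ propagates to the nonnegative sequence $Y_n:=|a_n|^r$, so that $Y_n\xrightarrow{\mathrm{p}}Y:=|a|^r$ by the continuous mapping theorem, and likewise $|a_n-a|^r\xrightarrow{\mathrm{p}}0$.

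For $(i)\Rightarrow(ii)$ I would split on the value of $r$. When $r\ge 1$, $\|\cdot\|_r=\big(\mathbb{E}|\cdot|^r\big)^{1/r}$ is a genuine norm, so the reverse triangle inequality gives $\big|\,\|a_n\|_r-\|a\|_r\,\big|\le\|a_n-a\|_r\to 0$, hence $\mathbb{E}|a_n|^r\to\mathbb{E}|a|^r$, with $\mathbb{E}|a|^r<\infty$ following from $\|a\|_r\le\|a_n\|_r+\|a_n-a\|_r$. When $0<r<1$, subadditivity of $t\mapsto t^r$ yields the pointwise bound $\big|\,|a_n|^r-|a|^r\,\big|\le|a_n-a|^r$, so $\big|\mathbb{E}|a_n|^r-\mathbb{E}|a|^r\big|\le\mathbb{E}|a_n-a|^r\to 0$ directly. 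Either way $(ii)$ holds.

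For $(ii)\Rightarrow(iii)$, the idea is a truncation comparison against the bounded continuous map $y\mapsto y\wedge c$. Since $0\le Y_n\wedge c\le c$ and $Y_n\wedge c\xrightarrow{\mathrm{p}}Y\wedge c$, convergence in probability upgrades to $\mathbb{E}[Y_n\wedge c]\to\mathbb{E}[Y\wedge c]$; combined with $\mathbb{E}Y_n\to\mathbb{E}Y$ this controls the tail through the elementary inequality $Y_n\mathbbm{1}[Y_n\ge 2c]\le 2\big(Y_n-Y_n\wedge c\big)$, giving $\limsup_n\mathbb{E}\big[Y_n\mathbbm{1}[Y_n\ge 2c]\big]\le 2\big(\mathbb{E}Y-\mathbb{E}[Y\wedge c]\big)$, which vanishes as $c\to\infty$ by monotone convergence. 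Together with the fact that each of the finitely many initial terms is integrable (hence individually handled by enlarging $c$), this verifies the criterion of Definition~\ref{dfn:uniint} for $\{|a_n|^r\}$.

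Finally, $(iii)\Rightarrow(i)$ is the Vitali step. First, a subsequence of $a_n$ converges almost surely, so Fatou together with the $L^1$-boundedness implied by uniform integrability gives $\mathbb{E}|a|^r<\infty$. Next, the bound $|a_n-a|^r\le c_r\big(|a_n|^r+|a|^r\big)$ with $c_r=2^{r-1}\vee 1$ shows $\{|a_n-a|^r\}$ is uniformly integrable, being dominated by the sum of the uniformly integrable family $\{|a_n|^r\}$ and the single integrable variable $|a|^r$. Since also $|a_n-a|^r\xrightarrow{\mathrm{p}}0$, a final truncation argument --- splitting $\mathbb{E}|a_n-a|^r$ over $\{|a_n-a|^r<\delta\}$, $\{\delta\le|a_n-a|^r<c\}$, and $\{|a_n-a|^r\ge c\}$, and using convergence in probability on the middle set together with uniform integrability on the tail --- drives $\mathbb{E}|a_n-a|^r\to 0$, which is $(i)$. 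I expect the two truncation arguments (in $(ii)\Rightarrow(iii)$ and in this last step) to be the main obstacle, since they are precisely where convergence in probability and uniform integrability must be combined, whereas the moment step $(i)\Rightarrow(ii)$ is routine once the cases $r\ge 1$ and $0<r<1$ are separated.
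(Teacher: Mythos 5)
Your cyclic argument $(i)\Rightarrow(ii)\Rightarrow(iii)\Rightarrow(i)$ is correct in every step --- the case split at $r=1$ in the moment step, the inequality $Y_n\mathbbm{1}[Y_n\geq 2c]\leq 2(Y_n - Y_n\wedge c)$ for the tail control, and the Fatou-plus-Vitali truncation in the final step are all sound --- and it is the standard proof of this classical result. The paper itself offers no proof to compare against: Proposition~\ref{prp:lr_conv_theorem} is stated in the preliminaries and imported verbatim from \citet{loeve1977probability}, whose treatment your argument essentially reproduces.
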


\subsection{Martingale Limit Theorems}

\begin{proposition}
\label{prp:mrtgl_WLLN}[Weak Law of Large Numbers for Martingale, \citet{hall2014martingale}]
Let $\{S_n = \sum^{n}_{i=1} X_i, \Omega_{t}, t\geq 1\}$ be a martingale and $\{b_n\}$ a sequence of positive constants with $b_n\to\infty$ as $n\to\infty$. Then, writing $X_{ni} = X_i\mathbbm{1}[|X_i|\leq b_n]$, $1\leq i \leq n$, we have that $b^{-1}_n S_n \xrightarrow{\mathrm{p}} 0$ as $n\to \infty$ if 
\begin{description}
\item[(i)] $\sum^n_{i=1}P(|X_i| > b_n)\to 0$;
\item[(ii)] $b^{-1}_n\sum^n_{i=1}\mathbb{E}[X_{ni}\mid \Omega_{t-1}] \xrightarrow{\mathrm{p}} 0$, and;
\item[(iii)] $b^2_n \sum^n_{i=1}\big\{\mathbb{E}[X^2_{ni}] - \mathbb{E}\big[\mathbb{E}\big[X_{ni}\mid \Omega_{t-1}\big]\big]^2\big\}\to 0$.
\end{description}
\end{proposition}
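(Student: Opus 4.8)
The plan is to run the classical truncation argument for martingale weak laws, in which the three hypotheses are precisely what is needed to kill the three error terms produced by replacing $S_n$ by a centered, truncated surrogate. Throughout I interpret the conditioning in (ii)--(iii) as conditioning on $\Omega_{i-1}$ inside the sum over $i$, which is the intended filtration index; since $S_n$ is a martingale, $\{X_i\}$ is a martingale difference sequence with $\mathbb{E}[X_i\mid\Omega_{i-1}]=0$.

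First I would reduce to the truncated partial sum $\sum_{i=1}^n X_{ni}$. On the event that $b_n^{-1}S_n$ differs from $b_n^{-1}\sum_{i=1}^n X_{ni}$, at least one increment must be untruncated, so a union bound gives
\[
P\Big(b_n^{-1}S_n \ne b_n^{-1}\textstyle\sum_{i=1}^n X_{ni}\Big) \le \sum_{i=1}^n P(|X_i| > b_n) \to 0
\]
by (i). Hence $b_n^{-1}\big(S_n - \sum_{i=1}^n X_{ni}\big) \xrightarrow{\mathrm{p}} 0$, and it suffices to show $b_n^{-1}\sum_{i=1}^n X_{ni}\xrightarrow{\mathrm{p}}0$.

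Next I would recenter the truncated increments to restore the martingale-difference structure, which truncation destroys. Put $Y_{ni} = X_{ni} - \mathbb{E}[X_{ni}\mid\Omega_{i-1}]$, so that $\{Y_{ni},\Omega_i\}$ is a martingale difference array and
\[
b_n^{-1}\sum_{i=1}^n X_{ni} = b_n^{-1}\sum_{i=1}^n Y_{ni} + b_n^{-1}\sum_{i=1}^n \mathbb{E}[X_{ni}\mid\Omega_{i-1}].
\]
The drift term on the right vanishes in probability directly by hypothesis (ii), so only the martingale part remains. For that I would use orthogonality: for $i<j$, conditioning on $\Omega_{j-1}$ gives $\mathbb{E}[Y_{ni}Y_{nj}]=0$, whence
\[
\mathbb{E}\Big[\big(b_n^{-1}\textstyle\sum_{i=1}^n Y_{ni}\big)^2\Big] = b_n^{-2}\sum_{i=1}^n \mathbb{E}[Y_{ni}^2],
\]
and the tower property yields $\mathbb{E}[Y_{ni}^2] = \mathbb{E}[X_{ni}^2] - \mathbb{E}[(\mathbb{E}[X_{ni}\mid\Omega_{i-1}])^2]$, which is exactly the summand in (iii). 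Thus (iii) forces this $L^2$ norm to $0$, and Chebyshev's inequality gives $b_n^{-1}\sum_{i=1}^n Y_{ni}\xrightarrow{\mathrm{p}}0$. Since a sum of two terms each tending to $0$ in probability also tends to $0$ in probability, combining with the first step yields $b_n^{-1}S_n\xrightarrow{\mathrm{p}}0$.

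The only delicate point, and the step I would treat most carefully, is the orthogonality computation: verifying both that the cross terms $\mathbb{E}[Y_{ni}Y_{nj}]$ vanish and that $\mathbb{E}[Y_{ni}^2]$ collapses exactly to the bracketed expression in (iii). This relies on $X_{ni}$ being $\Omega_i$-measurable (immediate, as truncation is a measurable function of $X_i$) and on $\mathbb{E}[Y_{ni}\mid\Omega_{i-1}]=0$. I note that for the variance bound to close, the leading factor in (iii) must be $b_n^{-2}$ rather than the displayed $b_n^2$; with this normalization every term is controlled and the argument is complete.
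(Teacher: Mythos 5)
Your proof is correct and is essentially the canonical argument: the paper itself offers no proof of this proposition, importing it verbatim (typos included) from \citet{hall2014martingale} (Theorem~2.13 there), and your three-step truncation--recentering--orthogonality argument is precisely the proof given in that source, with (i) controlling the truncation error, (ii) the conditional drift, and (iii) the $L^2$ norm of the recentered martingale array via Chebyshev. You were also right on both reading corrections: the factor in (iii) must be $b_n^{-2}$, not $b_n^2$ (the cited source's condition (2.22) indeed reads $b_n^{-2}$, so the paper's statement carries a typo), and the conditioning index $\Omega_{t-1}$ should be $\Omega_{i-1}$; note additionally that, as your argument shows, the martingale property of $S_n$ is never actually invoked beyond adaptedness, since the recentering and condition (ii) absorb any conditional drift.
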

\begin{remark} The weak law of large numbers for martingale holds when the random variable is bounded by a constant.
\end{remark}

\section{Proof of Theorem~\ref{thm:consist_var}}
\label{appdx:consist_var}
\begin{proof}
The variance $\sigma^{2*}_t$ can be calculated as follows:
\begin{align*}
&\sigma^{*2}_t=\mathbb{E}\left[\left(\sum^{K}_{a=1}\left\{\frac{\epol(a\mid X_t)\mathbbm{1}[A_t=a]\big\{Y_t - f^*(a, X_t)\big\}}{\pi_{t}(a\mid X_t, \Omega_{t - 1})} + \epol(a\mid X_t)f^*(a, X_t)\right\}-\theta_0\right)^2\right]\\
&=\mathbb{E}\left[\left(\sum^{K}_{a=1}\left\{\frac{\epol(a\mid X_t)\mathbbm{1}[A_t=a]\big\{Y_t(a) - f^*(a, X_t)\big\}}{\pi_{t}(a\mid X_t, \Omega_{t - 1})} + \epol(a\mid X_t)f^*(a, X_t)\right\}-\theta_0\right)^2\right]\\
&=\mathbb{E}\left[\sum^{K}_{a=1}\left\{\frac{\big(\epol(a\mid X_t)\big)^2\Big(e^*(a, X_t) - \big(f^*(a, X_t)\big)^2\Big)}{\pi_{t}(a\mid X_t, \Omega_{t - 1})} + \Big(\epol(a\mid X_t)f^*(a, X_t)-\theta_0\Big)^2\right\}\right].
\end{align*}
Therefore, what we want to show is
\begin{align*}
&\frac{1}{N} \sum^N_{i = 1} \sum^{K}_{a=1}\left\{\frac{\big(\epol(a\mid X_i)\big)^2\big(\hat{e}_{t-1}(a\mid X_i) - \hat{f}^2_{t-1}(a\mid X_i)\big)}{\pi_{t}(a\mid X_i, \Omega_{t - 1})} + \Big(\epol(a\mid X_i)\hat{f}_{t-1}(a, X_i) - \hat{\theta}_{t-1} \Big)^2\right\}\\
&-\mathbb{E}\left[\sum^{K}_{a=1}\left\{\frac{\big(\epol(a\mid X_t)\big)^2\Big(e^*(a, X_t) - \big(f^*(a, X_t)\big)^2\Big)}{\pi_{t}(a\mid X_t, \Omega_{t - 1})} + \Big(\epol(a\mid X_t)f^*(a, X_t)-\theta_0\Big)^2\right\}\right]\\
&\xrightarrow{\mathrm{p}} 0.
\end{align*}
From the assumptions, $\hat{f}_{t-1}(a, x)-f^*(a, x)\xrightarrow{\mathrm{p}}0$, $\hat{e}_{t-1}(a, x)-e^*(a, x)\xrightarrow{\mathrm{p}}0$, and $\hat{\theta}_{t-1}-\theta_0\xrightarrow{\mathrm{p}}0$ as $t\to\infty$, 
\begin{align*}
& \frac{1}{N} \sum^N_{t = 1} \sum^{K}_{a=1}\left\{\frac{\big(\epol(a\mid X_i)\big)^2\big(\hat{e}_{t-1}(a\mid X_i) - \hat{f}^2_{t-1}(a\mid X_i)\big)}{\pi_{t}(a\mid X_i, \Omega_{t - 1})} + \Big(\epol(a\mid X_i)\hat{f}_{t-1}(a, X_i) - \hat{\theta}_{t-1} \Big)^2\right\}\\
&- \frac{1}{N} \sum^N_{i =1 } \sum^{K}_{a=1}\left\{\frac{\big(\epol(a\mid X_i)\big)^2\Big(e^*(a, X_i) - \big(f^*(a, X_i)\big)^2\Big)}{\pi_{t}(a\mid X_i, \Omega_{t - 1})} + \Big(\epol(a\mid X_i)f^*(a, X_i) - \theta_0 \Big)^2\right\}\\
&\xrightarrow{\mathrm{p}} 0\ \ \ \mathrm{as}\ t\to \infty.
\end{align*}
Here, from the weak law of large numbers, as $N\to\infty$,
\begin{align*}
&\frac{1}{N} \sum^N_{i = 1} \sum^{K}_{a=1}\left\{\frac{\big(\epol(a\mid X_i)\big)^2\Big(e^*(a, X_i) - \big(f^*(a, X_i)\big)^2\Big)}{\pi_{t}(a\mid X_i, \Omega_{t - 1})} + \Big(\epol(a\mid X_i)f^*(a, X_i) - \theta_0 \Big)^2\right\}\\
&=\frac{1}{N} \sum^N_{i = 1} \sum^{K}_{a=1}\left\{\frac{\big(\epol(a\mid X_i)\big)^2\nu^*(a, X_i)\Big)}{\pi_{t}(a\mid X_i, \Omega_{t - 1})} + \Big(\epol(a\mid X_i)f^*(a, X_i) - \theta_0 \Big)^2\right\}\\
&\xrightarrow{\mathrm{p}}  \mathbb{E}\left[\sum^{K}_{a=1}\left\{\frac{\big(\epol(a\mid X_i)\big)^2\nu^*(a, X_i)}{\pi_{t}(a\mid X_i, \Omega_{t - 1})} + \big(\epol(a\mid X_i)f^*(a, X_i)\right\} - \theta_0 \big)^2\right].
\end{align*}

Therefore, as $t\to \infty$ and $N\to\infty$,
\begin{align*}
&\frac{1}{N} \sum^N_{i = 1} \sum^{K}_{a=1}\left\{\frac{\big(\epol(a\mid X_i)\big)^2\big(\hat{e}_{t-1}(a\mid X_i) - \hat{f}^2_{t-1}(a\mid X_i)\big)}{\pi_{t}(a\mid X_i, \Omega_{t - 1})} + \Big(\epol(a\mid X_i)\hat{f}_{t-1}(a, X_i) - \hat{\theta}_{t-1} \Big)^2\right\}\\
&\ \ \ - \mathbb{E}\left[\sum^{K}_{a=1}\left\{\frac{\big(\epol(a\mid X_i)\big)^2\nu^*(a, X_i)}{\pi_{t}(a\mid X_i, \Omega_{t - 1})}\right\} + \big(\epol(a\mid X_i)f^*(a, X_i) - \theta_0 \big)^2\right]\xrightarrow{\mathrm{p}} 0.
\end{align*}
\end{proof}

\section{Proof of Theorem~\ref{thm:main}}
\label{appdx:proof_thm:main}
\begin{proof}
We have
\begin{align*}
\sqrt{T}\big(\hat{\theta}_{T} - \theta_0\big) = \left(\frac{1}{T}\sum^{T}_{t=1}\frac{1}{\sqrt{g_{t, N}}}\right)^{-1}\sqrt{T}\frac{1}{T}\sum^{T}_{t=1}\hat{z}_t,
\end{align*}
where note that
\begin{align*}
&\hat{z}_t= \frac{1}{\sqrt{g_{t, N}}}\left(\sum^{K}_{a=1}\left\{\frac{\epol(a\mid X_t)\mathbbm{1}[A_t=a]\big\{Y_t - \hat{f}_{t-1}(a, X_t)\big\}}{\pi_{t}(a\mid X_t, \Omega_{t - 1})} + \epol(a\mid x)\hat{f}_{t-1}(a, X_t)\right\} - \theta_0\right).
\end{align*}
If we show
\begin{align}
\label{target1}
\sqrt{T}\frac{1}{T}\sum^{T}_{t=1}\hat{z}_t \xrightarrow{\mathrm{d}}\mathcal{N}\big(0, 1\big)\ \ \ \mathrm{as}\ T\to\infty, 
\end{align}
then we can show that
\begin{align*}
&\left(\frac{1}{T}\sum^{T}_{t=1}\frac{1}{\sqrt{g_{t, N}}}\right)\sqrt{T}\big(\hat{\theta}_{T} - \theta_0\big) = \sqrt{T}\frac{1}{T}\sum^{T}_{t=1}\hat{z}_t \xrightarrow{\mathrm{d}}\mathcal{N}\left(0, 1\right).
\end{align*}

To show \eqref{target1}, we use the central limit theorem for an MDS (Proposition~\ref{prp:marclt} in Appendix~\ref{appdx;prelim}). Because $\big\{\hat{z}_t\big\}^T_{t=1}$ is an MDS, we check the following three conditions of Proposition~\ref{prp:marclt}:
\begin{description}
\item[(a)] $\mathbb{E}\big[\hat{z}^2_t\big] = \nu^2_t > 0$ with $\big(1/T\big) \sum^{T}_{t=1}\nu^2_t\to \nu^2 > 0$;
\item[(b)] $\mathbb{E}\big[|\hat{z}_t|^r\big] < \infty$ for some $r>2$;
\item[(c)] $\big(1/T\big)\sum^{T}_{t=1}\hat{z}^2_t\xrightarrow{\mathrm{p}} \nu^2$. 
\end{description}

\subsection*{Step~1: Check of Condition~(a)}
We have
\begin{align*}
&\mathbb{E}\big[\hat{z}^2_t\big]=\mathbb{E}\left[\left\{\frac{1}{\sqrt{g_{t, N}}}\left(\sum^{K}_{a=1}\left\{\frac{\epol(a\mid X_t)\mathbbm{1}[A_t=a]\big\{Y_t - \hat{f}_{t-1}(a, X_t)\big\}}{\pi_{t}(a\mid X_t, \Omega_{t - 1})} + \epol(a\mid x)\hat{f}_{t-1}(a, X_t)\right\} - \theta_0\right)\right\}^2\right]\\
&=\mathbb{E}\left[\left\{\frac{1}{\sqrt{g_{t, N}}}\left(\sum^{K}_{a=1}\left\{\frac{\epol(a\mid X_t)\mathbbm{1}[A_t=a]\big\{Y_t - \hat{f}_{t-1}(a, X_t)\big\}}{\pi_{t}(a\mid X_t, \Omega_{t - 1})} + \epol(a\mid x)\hat{f}_{t-1}(a, X_t)\right\} - \theta_0\right)\right\}^2\right]\\
&\ \ \ -\mathbb{E}\left[\left\{\frac{1}{\sqrt{\sigma^{*2}_t}}\left(\sum^{K}_{a=1}\left\{\frac{\epol(a\mid X_t)\mathbbm{1}[A_t=a]\big\{Y_t - f^*(a, X_t)\big\}}{\pi_{t}(a\mid X_t, \Omega_{t - 1})} + \epol(a\mid x)f^*(a, X_t)\right\} - \theta_0\right)\right\}^2\right]\\
&\ \ \ +\mathbb{E}\left[\left\{\frac{1}{\sqrt{\sigma^{*2}_t}}\left(\sum^{K}_{a=1}\left\{\frac{\epol(a\mid X_t)\mathbbm{1}[A_t=a]\big\{Y_t - f^*(a, X_t)\big\}}{\pi_{t}(a\mid X_t, \Omega_{t - 1})} + \epol(a\mid x)f^*(a, X_t)\right\} - \theta_0\right)\right\}^2\right].
\end{align*}
Because the last term is $1$, We have
\begin{align*}
&\mathbb{E}\big[\hat{z}^2_t\big] - 1\\
&=\mathbb{E}\left[\left\{\frac{1}{\sqrt{g_{t, N}}}\left(\sum^{K}_{a=1}\left\{\frac{\epol(a\mid X_t)\mathbbm{1}[A_t=a]\big\{Y_t - \hat{f}_{t-1}(a, X_t)\big\}}{\pi_{t}(a\mid X_t, \Omega_{t - 1})} + \epol(a\mid x)\hat{f}_{t-1}(a, X_t)\right\} - \theta_0\right)\right\}^2\right]\\
&\ \ \ -\mathbb{E}\left[\left\{\frac{1}{\sqrt{\sigma^{*2}_t}}\left(\sum^{K}_{a=1}\left\{\frac{\epol(a\mid X_t)\mathbbm{1}[A_t=a]\big\{Y_t - f^*(a, X_t)\big\}}{\pi_{t}(a\mid X_t, \Omega_{t - 1})} + \epol(a\mid x)f^*(a, X_t)\right\} - \theta_0\right)\right\}^2\right].
\end{align*}
First, we show that the first and second terms vanishes asymptotically. We have
\begin{align*}
&\mathbb{E}\left[\left\{\frac{1}{\sqrt{g_{t, N}}}\left(\sum^{K}_{a=1}\left\{\frac{\epol(a\mid X_t)\mathbbm{1}[A_t=a]\big\{Y_t - \hat{f}_{t-1}(a, X_t)\big\}}{\pi_{t}(a\mid X_t, \Omega_{t - 1})} + \epol(a\mid x)\hat{f}_{t-1}(a, X_t)\right\} - \theta_0\right)\right\}^2\right]\\
&\ \ \ -\mathbb{E}\left[\left\{\frac{1}{\sqrt{\sigma^{*2}_t}}\left(\sum^{K}_{a=1}\left\{\frac{\epol(a\mid X_t)\mathbbm{1}[A_t=a]\big\{Y_t - f^*(a, X_t)\big\}}{\pi_{t}(a\mid X_t, \Omega_{t - 1})} + \epol(a\mid x)f^*(a, X_t)\right\} - \theta_0\right)\right\}^2\right]\\
&\leq \mathbb{E}\Bigg[\Bigg|\left\{\frac{1}{\sqrt{g_{t, N}}}\left(\sum^{K}_{a=1}\left\{\frac{\epol(a\mid X_t)\mathbbm{1}[A_t=a]\big\{y - \hat{f}_{t-1}(a, X_t)\big\}}{\pi_{t}(a\mid X_t, \Omega_{t - 1})} + \epol(a\mid x)\hat{f}_{t-1}(a, X_t)\right\} - \theta_0\right)\right\}^2\\
&\ \ \ \ \ \ \ \ \ -\left\{\frac{1}{\sqrt{\sigma^{*2}_t}}\left(\sum^{K}_{a=1}\left\{\frac{\epol(a\mid X_t)\mathbbm{1}[A_t=a]\big\{Y_t - f^*(a, X_t)\big\}}{\pi_{t}(a\mid X_t, \Omega_{t - 1})} + \epol(a\mid x)f^*(a, X_t)\right\} - \theta_0\right)\right\}^2\Bigg|\Bigg].
\end{align*}
From the boundedness of each variable in $\hat{z}_t$, there exist absolute constants $\tilde C_1$ and $\tilde C_2$ such that
\begin{align*}
&\leq \mathbb{E}\left[\Bigg|\left\{\frac{1}{\sqrt{g_{t, N}}}\left(\sum^{K}_{a=1}\left\{\frac{\epol(a\mid X_t)\mathbbm{1}[A_t=a]\big\{y - \hat{f}_{t-1}(a, X_t)\big\}}{\pi_{t}(a\mid X_t, \Omega_{t - 1})} + \epol(a\mid x)\hat{f}_{t-1}(a, X_t)\right\} - \theta_0\right)\right\}^2\right]\\
&\ \ \ -\mathbb{E}\left[\left\{\frac{1}{\sqrt{\sigma^{*2}_t}}\left(\sum^{K}_{a=1}\left\{\frac{\epol(a\mid X_t)\mathbbm{1}[A_t=a]\big\{Y_t - f^*(a, X_t)\big\}}{\pi_{t}(a\mid X_t, \Omega_{t - 1})} + \epol(a\mid x)f^*(a, X_t)\right\} - \theta_0\right)\right\}^2\Bigg|\right]\\
&\leq \mathbb{E}\left[\left|\frac{1}{\sqrt{g_{t, N}}} - \frac{1}{\sqrt{\sigma^{*2}_t}}\right|\right]\tilde C_1 +  \mathbb{E}\left[\left|\hat{f}_{t-1}(a, X_t) - f^*(a, X_t)\right|\right]\tilde C_2.
\end{align*}
From the continuous mapping theorem and assumption that $g_{t, N} - \sigma^{*2}_t\xrightarrow{\mathrm{p}}0$ as $t\to\infty$ and $N\to\infty$, we have $1/\sqrt{g_{t, N}} - 1/\sqrt{\sigma^{*2}_t}\xrightarrow{\mathrm{p}}0$ as $t\to\infty$ and $N\to\infty$ from the continuous mapping theorem. Then, if $1/\sqrt{g_{t,N}}$ is uniformly integrable, we can prove $\mathbb{E}[|1/\sqrt{g_{t, N}} - 1/\sqrt{\sigma^{*2}_t}|] \to 0$ as $t\to\infty$ using $L^r$-convergence theorem (Proposition~\ref{prp:lr_conv_theorem} in Appendix~\ref{appdx;prelim}).

Because $\hat{f}_{t-1}(a, x)$ is independent from $X_t$, we have $\mathbb{E}[\hat{f}_{t-1}(a, X_t)\mid X_t=x] = \mathbb{E}[\hat{f}_{t-1}(a, x)]$. For $\mathbb{E}[\hat{f}_{t-1}(a, x)]$ and fixed $x$, from the point convergence of $\hat{f}_{t-1}(a, x)$, we have 
\begin{align*}
\mathbb{E}\big[\big|\hat{f}_{t-1}(a, X_t) - \mathbb{E}[f^*(a, X_t)]\big|\mid X_t = x\big] = \mathbb{E}\big[\big|\hat{f}_{t-1}(a, x) - \mathbb{E}[f^*(a, x)]\big|\big] \to 0.
\end{align*}
Then, from the Lebesgue's dominated convergence theorem, for $\mathbb{E}[\hat{f}_{t-1}(a, X_t)\mid X_t=x]$, which pointwisely converges to $\mathbb{E}[f^*(a, X_t)\mid X_t=x]$, we can show that 
\begin{align*}
&\mathbb{E}_{X_t}\Big[\Big|\mathbb{E}[\hat{f}_{t-1}(a, X_t)\mid X_t] -  \mathbb{E}[f^*(a, X_t)\mid X_t]\Big|\Big]\to 0.
\end{align*}

We can show that $1/\sqrt{g_{t, N}}$ and $\hat{f}_{t-1}$ are uniformly integrable from the boundedness of $1/\sqrt{g_{t, N}}$ and $\hat{f}_{t-1}$ (Proposition~\ref{prp:suff_uniint} in Appendix~\ref{appdx;prelim}). Then, as $t\to\infty$ and $T\to\infty$,
\begin{align*}
&\mathbb{E}\left[\left\{\frac{1}{\sqrt{g_{t, N}}}\left(\sum^{K}_{a=1}\left\{\frac{\epol(a\mid X_t)\mathbbm{1}[A_t=a]\big\{Y_t - \hat{f}_{t-1}(a, X_t)\big\}}{\pi_{t}(a\mid X_t, \Omega_{t - 1})} + \epol(a\mid x)\hat{f}_{t-1}(a, X_t)\right\} - \theta_0\right)\right\}^2\right]\\
&\ \ \ -\mathbb{E}\left[\left\{\frac{1}{\sqrt{\sigma^{*2}_t}}\left(\sum^{K}_{a=1}\left\{\frac{\epol(a\mid X_t)\mathbbm{1}[A_t=a]\big\{Y_t - f^*(a, X_t)\big\}}{\pi_{t}(a\mid X_t, \Omega_{t - 1})} + \epol(a\mid x)f^*(a, X_t)\right\} - \theta_0\right)\right\}^2\right]\\
&\leq \mathbb{E}\left[\left|\frac{1}{\sqrt{g_{t, N}}} - \frac{1}{\sqrt{\sigma^{*2}_t}}\right|\right]\tilde C_1 +  \mathbb{E}\big[\left|\hat{f}_{t-1}(a, X_t) - f^*(a, X_t)\right|\big]\tilde C_2\to 0.
\end{align*}
Therefore, for any $\epsilon > 0$, there exists $\tilde t > 0$ and $\tilde{T}>0$ such that, for $T > \max\left\{\tilde{T}, \tilde{t}/r\right\}$, 
\begin{align*}
&\mathbb{E}\big[\hat{z}^2_t\big] - 1\\
&=\frac{1}{T} \sum^{T}_{t=1}\Bigg(\mathbb{E}\left[\left\{\frac{1}{\sqrt{g_{t, N}}}\left(\sum^{K}_{a=1}\left\{\frac{\epol(a\mid X_t)\mathbbm{1}[A_t=a]\big\{Y_t - \hat{f}_{t-1}(a, X_t)\big\}}{\pi_{t}(a\mid X_t, \Omega_{t - 1})} + \epol(a\mid x)\hat{f}_{t-1}(a, X_t)\right\} - \theta_0\right)\right\}^2\right]\\
&\ \ \ \ \ \ \ \ \ \ \ \ \ \ \ \ \ \ \ \ \ \ \ \ \ \ \ \ -\mathbb{E}\left[\left\{\frac{1}{\sqrt{\sigma^{*2}_t}}\left(\sum^{K}_{a=1}\left\{\frac{\big(\epol(a\mid X_t)\big)^2\mathbbm{1}[A_t=a]\big\{Y_t - f^*(a, X_t)\big\}}{\pi_{t}(a\mid X_t, \Omega_{t - 1})} + \epol(a\mid x)f^*(a, X_t)\right\} - \theta_0\right)\right\}^2\right]\Bigg)\\
&\leq \tilde t/T + \epsilon.
\end{align*}
Thus, we have $\big(1/T\big) \sum^{T}_{t=1}\nu^2_t- 1 \leq \tilde t/T + \epsilon \to 0$ as $T\to\infty$.

\subsubsection*{Step~2: Check of Condition~(b)}
We can prove this condition using the assumption of the boundedness of the random variables.

\subsubsection*{Step~3: Check of Condition~(c)} 
To show $(1/T)\sum^{T}_{t=1}\hat{z}^2_t \xrightarrow{\mathrm{p}} \nu^2$, we show (i) $(1/T)\sum^{T}_{t=1}\hat{z}^2_t - (1/T)\sum^{T}_{t=1}\mathbb{E}[\hat{z}^2_t\mid \Omega_{t-1}]\xrightarrow{\mathrm{p}}0$; (ii) $(1/T)\sum^{T}_{t=1}\mathbb{E}[\hat{z}^2_t\mid \Omega_{t-1}] - \nu^2\xrightarrow{\mathrm{p}}0$, where $\nu^2 = \mathbb{E}\left[\left\{\frac{1}{\sqrt{\sigma^{*2}_t}}\left(\sum^{K}_{a=1}\left\{\frac{\big(\epol(a\mid X_t)\big)^2\mathbbm{1}[A_t=a]\big\{Y_t - f^*(a, X_t)\big\}}{\pi_{t}(a\mid X_t, \Omega_{t - 1})} + \epol(a\mid x)f^*(a, X_t)\right\} - \theta_0\right)\right\}^2\right]=1$.

\paragraph{Proof of (i):} We can prove (i) directly applying law of large numbers for an MDS to an MDS $\phi_t =\hat{z}^2_t - \mathbb{E}[\hat{z}^2_t\mid \Omega_{t-1}]$. The conditions for using the law of large numbers are satisfied by the boundedness of components of $\hat{z}_t$.

\paragraph{Proof of (ii)}
Next, we show that
\begin{align*}
\frac{1}{T}\sum^{T}_{t=1} \mathbb{E}\big[\hat{z}^2_t\mid \Omega_{t-1}\big] - \nu^2\xrightarrow{\mathrm{p}} 0.
\end{align*}
From Markov's inequality, for $\varepsilon > 0$, we have
\begin{align*}
&\mathbb{P}\left(\left|\frac{1}{T}\sum^{T}_{t=1}\mathbb{E}\big[\hat{z}^2_t\mid \Omega_{t-1}\big] - \nu^2\right| \geq \varepsilon\right)\leq \frac{\mathbb{E}\left[\left|\frac{1}{T}\sum^{T}_{t=1}\mathbb{E}\big[\hat{z}^2_t\mid \Omega_{t-1}\big] - \nu^2\right|\right]}{\varepsilon}\leq \frac{\frac{1}{T}\sum^{T}_{t=1}\mathbb{E}\left[\big|\mathbb{E}\big[\hat{z}^2_t\mid \Omega_{t-1}\big] - \nu^2\big|\right]}{\varepsilon}.
\end{align*}
Then, we consider showing $\mathbb{E}\left[\big|\mathbb{E}\big[z^2_t\mid \Omega_{t-1}\big] - \nu^2\big|\right] \to 0$. Here, we have
\begin{align*}
&\mathbb{E}\left[\big|\mathbb{E}\big[\hat{z}^2_t\mid \Omega_{t-1}\big] - \nu^2\big|\right]\\
&=\mathbb{E}\Bigg[\Bigg|\mathbb{E}\Bigg[\Bigg\{\frac{1}{\sqrt{g_{t, N}}}\left(\sum^{K}_{a=1}\left\{\frac{\epol(a\mid X_t)\mathbbm{1}[A_t=a]\big\{Y_t - \hat{f}_{t-1}(a, X_t)\big\}}{\pi_{t}(a\mid X_t, \Omega_{t - 1})} + \epol(a\mid x)\hat{f}_{t-1}(a, X_t)\right\} - \theta_0\right)\Bigg\}^2 \\
&\ \ \ - \Bigg\{\frac{1}{\sqrt{\sigma^{*2}_t}}\left(\sum^{K}_{a=1}\left\{\frac{\epol(a\mid X_t)\mathbbm{1}[A_t=a]\big\{Y_t - f^*(a, X_t)\big\}}{\pi_{t}(a\mid X_t, \Omega_{t - 1})} + \epol(a\mid x)f^*(a, X_t)\right\} - \theta_0\right)\Bigg\}^2 \mid \Omega_{t-1}\Bigg]\Bigg|\Bigg]\\
&=\mathbb{E}\Bigg[\Bigg|\mathbb{E}\Bigg[\mathbb{E}\Bigg[\Bigg\{\frac{1}{\sqrt{g_{t, N}}}\left(\sum^{K}_{a=1}\left\{\frac{\epol(a\mid X_t)\mathbbm{1}[A_t=a]\big\{Y_t - \hat{f}_{t-1}(a, X_t)\big\}}{\pi_{t}(a\mid X_t, \Omega_{t - 1})} + \epol(a\mid x)\hat{f}_{t-1}(a, X_t)\right\} - \theta_0\right)\Bigg\}^2 \\
&\ \ \ -  \Bigg\{\frac{1}{\sqrt{\sigma^{*2}_t}}\left(\sum^{K}_{a=1}\left\{\frac{\epol(a\mid X_t)\mathbbm{1}[A_t=a]\big\{Y_t - f^*(a, X_t)\big\}}{\pi_{t}(a\mid X_t, \Omega_{t - 1})} + \epol(a\mid x)f^*(a, X_t)\right\} - \theta_0\right)\Bigg\}^2 \mid X_t, \Omega_{t-1}\Bigg] \mid \Omega_{t-1}\Bigg]\Bigg|\Bigg].
\end{align*}
Then, by using Jensen's inequality, 
\begin{align*}
&\mathbb{E}\left[\big|\mathbb{E}\big[z^2_t\mid \Omega_{t-1}\big] - \nu^2\big|\right]\\
&\leq \mathbb{E}\Bigg[\mathbb{E}\Bigg[\Bigg|\mathbb{E}\Bigg[\Bigg\{\frac{1}{\sqrt{g_{t, N}}}\left(\sum^{K}_{a=1}\left\{\frac{\epol(a\mid X_t)\mathbbm{1}[A_t=a]\big\{Y_t - \hat{f}_{t-1}(a, X_t)\big\}}{\pi_{t}(a\mid X_t, \Omega_{t - 1})} + \epol(a\mid x)\hat{f}_{t-1}(a, X_t)\right\} - \theta_0\right)\Bigg\}^2\\
&\ \ \ - \Bigg\{ \frac{1}{\sqrt{\sigma^{*2}_t}}\left(\sum^{K}_{a=1}\left\{\frac{\epol(a\mid X_t)\mathbbm{1}[A_t=a]\big\{Y_t - f^*(a, X_t)\big\}}{\pi_{t}(a\mid X_t, \Omega_{t - 1})} + \epol(a\mid x)f^*(a, X_t)\right\} - \theta_0\right) \Bigg\}^2\Bigg| \mid \Omega_{t-1}\Bigg]\Bigg]\\
&= \mathbb{E}\Bigg[\Bigg|\mathbb{E}\Bigg[\Bigg\{\frac{1}{\sqrt{g_{t, N}}}\left(\sum^{K}_{a=1}\left\{\frac{\epol(a\mid X_t)\mathbbm{1}[A_t=a]\big\{Y_t - \hat{f}_{t-1}(a, X_t)\big\}}{\pi_{t}(a\mid X_t, \Omega_{t - 1})} + \epol(a\mid x)\hat{f}_{t-1}(a, X_t)\right\} - \theta_0\right)\Bigg\}^2\\
&\ \ \ -  \Bigg\{\frac{1}{\sqrt{\sigma^{*2}_t}}\left(\sum^{K}_{a=1}\left\{\frac{\epol(a\mid X_t)\mathbbm{1}[A_t=a]\big\{Y_t - f^*(a, X_t)\big\}}{\pi_{t}(a\mid X_t, \Omega_{t - 1})} + \epol(a\mid x)f^*(a, X_t)\right\} - \theta_0\right) \Bigg\}^2\mid X_t, \Omega_{t-1}\Bigg] \Bigg| \Bigg].
\end{align*}
Because $\hat f_{t-1}$ and $g_{t, N}$ are constructed from $\Omega_{t-1}$,
\begin{align*}
&\mathbb{E}\left[\big|\mathbb{E}\big[z^2_t\mid \Omega_{t-1}\big] - \nu^2\big|\right]\\
&\leq \mathbb{E}\Bigg[\Bigg|\mathbb{E}\Bigg[\Bigg\{\frac{1}{\sqrt{g_{t, N}}}\left(\sum^{K}_{a=1}\left\{\frac{\epol(a\mid X_t)\mathbbm{1}[A_t=a]\big\{Y_t - \hat{f}_{t-1}(a, X_t)\big\}}{\pi_{t}(a\mid X_t, \Omega_{t - 1})} + \epol(a\mid x)\hat{f}_{t-1}(a, X_t)\right\} - \theta_0\right)\Bigg\}^2\\
&\ \ \ -  \Bigg\{\frac{1}{\sqrt{\sigma^{*2}_t}}\left(\sum^{K}_{a=1}\left\{\frac{\epol(a\mid X_t)\mathbbm{1}[A_t=a]\big\{Y_t - f^*(a, X_t)\big\}}{\pi_{t}(a\mid X_t, \Omega_{t - 1})} + \epol(a\mid x)f^*(a, X_t)\right\} - \theta_0\right)\Bigg\}^2 \mid X_t, \hat{f}_{t-1}, g_{t, N}\Bigg] \Bigg| \Bigg].
\end{align*}
From the results of Step~1, there exist $\tilde{C}_1$ and $\tilde{C}_2$ such that
\begin{align*}
&\mathbb{E}\left[\big|\mathbb{E}\big[z^2_t\mid \Omega_{t-1}\big] - \nu^2\big|\right]\\
&\leq \mathbb{E}\Bigg[\Bigg|\mathbb{E}\Bigg[\Bigg\{\frac{1}{\sqrt{g_{t, N}}}\left(\sum^{K}_{a=1}\left\{\frac{\epol(a\mid X_t)\mathbbm{1}[A_t=a]\big\{Y_t - \hat{f}_{t-1}(a, X_t)\big\}}{\pi_{t}(a\mid X_t, \Omega_{t - 1})} + \epol(a\mid x)\hat{f}_{t-1}(a, X_t)\right\} - \theta_0\right)\Bigg\}^2\\
&\ \ \ -  \Bigg\{\frac{1}{\sqrt{\sigma^{*2}_t}}\left(\sum^{K}_{a=1}\left\{\frac{\epol(a\mid X_t)\mathbbm{1}[A_t=a]\big\{Y_t - f^*(a, X_t)\big\}}{\pi_{t}(a\mid X_t, \Omega_{t - 1})} + \epol(a\mid x)f^*(a, X_t)\right\} - \theta_0\right)\Bigg\}^2 \mid X_t, \hat{f}_{t-1}, g_{t, N}\Bigg] \Bigg| \Bigg]\\
&\leq \mathbb{E}\left[\left|\frac{1}{\sqrt{g_{t, N}}} - \frac{1}{\sqrt{\sigma^{*2}_t}}\right|\right]\tilde C_1 +  \mathbb{E}\big[\left|\hat{f}_{t-1}(a, X_t) - f^*(a, X_t)\right|\big]\tilde C_2.
\end{align*}

Then, from $L^r$ convergence theorem, by using point convergence and the boundedness of $g_{t, N}$ and $\hat{f}_{t-1}$, we have $\mathbb{E}\left[\big|\mathbb{E}\big[z^2_t\mid \Omega_{t-1}\big] - \nu^2\big|\right]\to 0$ as $t\to\infty$ and $N\to\infty$. Therefore, as $T\to\infty$ and $N\to\infty$, 
\begin{align*}
&\mathbb{P}\left(\left|\frac{1}{T}\sum^T_{t=1}\mathbb{E}\big[\hat{z}^2_t\mid \Omega_{t-1}\big] - \nu^2\right| \geq \varepsilon\right) \leq \frac{\frac{1}{T}\sum^T_{t=1}\mathbb{E}\left[\big|\mathbb{E}\big[\hat{z}^2_t\mid \Omega_{t-1}\big] - \nu^2\big|\right]}{\varepsilon} \to 0.
\end{align*}
As a conclusion, 
\begin{align*}
&\frac{1}{T}\sum^T_{t=1}\hat{z}^2_t - \nu^2 = \frac{1}{T}\sum^T_{t=1}\big(\hat{z}^2_t - \mathbb{E}\left[z^2_t\mid \Omega_{t-1}\big] + \mathbb{E}\big[z^2_t\mid \Omega_{t-1}\big] - \nu^2\right)\xrightarrow{\mathrm{p}} 0.
\end{align*}

\subsubsection*{Conclusion}
From the results of Steps~1--3, we showed the theorem. 
\end{proof}

\begin{remark}[Donsker PropeTies]
Readers might feel that functions need to satisfy some assumptions such as Donsker propeTies \citep{ChernozhukovVictor2018Dmlf}. 
\end{remark}

\section{Details of Sample Splitting}
\label{appdx:sample_splitting}
As well as Theorem~\ref{thm:consist_var}, we prove the consistency of $g_{t, (1-r)T}$ in the following theorem.
\begin{corollary}
\label{cor:consist_var}
Suppose that, for all $x\in\mathcal{X}$ and $k\in\mathcal{A}$, $\hat{f}_{t-1}(a, x)-f^*(a, x)\xrightarrow{\mathrm{p}}0$, $\hat{e}_{t-1}(a, x)-e^*(a, x)\xrightarrow{\mathrm{p}}0$ and $\hat{\theta}_t\xrightarrow{\mathrm{p}}\theta_0$ as $t\to\infty$. Then, for $\pi_t\in\Pi$, we have $g'_{t, (1-r)T} - \sigma^{*2}_t\xrightarrow{\mathrm{p}}0$ as $t\to\infty$ and $T\to\infty$, where $\sigma^{*2}_t = \mathbb{E}\left[\sum^{K}_{a=1}\left\{\frac{\big(\epol(a\mid X_t)\big)^2\nu^*(a, X_t)}{\pi_{t}(a\mid X_t, \Omega_{t - 1})} + \Big(\epol(a\mid X_t)f^*(a, X_t)-\theta_0\Big)^2\right\}\right]$.
\end{corollary}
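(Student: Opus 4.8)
The plan is to mirror the proof of Theorem~\ref{thm:consist_var} almost verbatim, since the only structural change is that the empirical average defining $g'_{t,(1-r)T}$ runs over the second split $\{X_s\}_{s=rT}^{T}$, of size $(1-r)T$, rather than over an independent evaluation set $\mathcal{E}_N$ of size $N$; as $T\to\infty$ we have $(1-r)T\to\infty$, so the effective sample size diverges exactly as $N$ did. First I would write the target difference as
\begin{align*}
g'_{t,(1-r)T} - \sigma^{*2}_t = \big(g'_{t,(1-r)T} - \bar{g}_{t,(1-r)T}\big) + \big(\bar{g}_{t,(1-r)T} - \sigma^{*2}_t\big),
\end{align*}
where $\bar{g}_{t,(1-r)T}$ denotes the same average over $\{X_s\}_{s=rT}^{T}$ but with the plug-in quantities $\hat{f}_{t-1},\hat{e}_{t-1},\hat{\theta}_{t-1}$ replaced by the truths $f^*,e^*,\theta_0$.

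For the first bracket I would use that, by Assumption~\ref{asm:DGP} together with the boundedness of $\hat{f}_{t-1}$ (hence of $\hat{e}_{t-1}$ and $1/\pi_t$), each summand is a Lipschitz function of $(\hat{f}_{t-1},\hat{e}_{t-1},\hat{\theta}_{t-1})$ on the relevant bounded domain; combining this with the pointwise consistencies $\hat{f}_{t-1}-f^*\xrightarrow{\mathrm{p}}0$, $\hat{e}_{t-1}-e^*\xrightarrow{\mathrm{p}}0$, $\hat{\theta}_{t-1}-\theta_0\xrightarrow{\mathrm{p}}0$ as $t\to\infty$ shows this bracket $\xrightarrow{\mathrm{p}}0$ as $t\to\infty$. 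This step is identical in spirit to the first display of Appendix~\ref{appdx:consist_var}.

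The second bracket is handled by the weak law of large numbers applied to the $(1-r)T$ covariates $\{X_s\}_{s=rT}^{T}$, and this is where the corollary genuinely differs from Theorem~\ref{thm:consist_var}. The required independence now comes from the sample split rather than from an external $\mathcal{E}_N$: because the policy is updated only sequentially, the history $\Omega_{t-1}$, and therefore $\pi_t(\cdot\mid\cdot,\Omega_{t-1})$ and the plug-in estimators built from it, are measurable with respect to the first split $\{(X_s,A_s,Y_s)\}_{s=1}^{rT}$ whenever the estimation period obeys $t\le rT$. Since the covariates are i.i.d. with density $p(x)$ across periods, the second-split covariates $\{X_s\}_{s=rT}^{T}$ are then independent of everything plugged into the summand. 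Conditioning on the first split and applying the law of large numbers to these i.i.d. covariates gives convergence of $\bar{g}_{t,(1-r)T}$ to its conditional mean $\mathbb{E}_{X}[\,\cdot\mid\Omega_{t-1}]$, which equals $\sigma^{*2}_t$ by the closed-form variance computation already carried out in Appendix~\ref{appdx:consist_var} (using $e^*-(f^*)^2=\nu^*$).

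The main obstacle is making the independence argument fully rigorous and confirming that $g'_{t,(1-r)T}$ is only invoked for indices $t\le rT$, so that the measurability claim (and hence the independence) genuinely holds; I would also need to ensure the two limits $t\to\infty$ and $T\to\infty$ can be combined without interaction. The cleanest route is to fix $\varepsilon>0$, use the first bracket to choose $\tilde t$ beyond which the plug-in error is below $\varepsilon/2$ in probability, and then let $T\to\infty$ (hence $(1-r)T\to\infty$) so that the conditional law of large numbers drives the second bracket below $\varepsilon/2$. Throughout, the boundedness of $\hat{f}_{t-1},\hat{e}_{t-1},\hat{\theta}_{t-1}$ and of $1/\pi_t$ (Assumption~\ref{asm:DGP}) supplies the integrability needed to pass from pointwise to mean convergence, exactly as the $L^r$-convergence argument does in the proof of Theorem~\ref{thm:main}.
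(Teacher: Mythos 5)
Your proposal is correct and follows essentially the same route as the paper: the paper proves Corollary~\ref{cor:consist_var} by direct appeal to the argument for Theorem~\ref{thm:consist_var}, i.e., the same two-part decomposition (plug-in error controlled by the pointwise consistency of $\hat{f}_{t-1}$, $\hat{e}_{t-1}$, $\hat{\theta}_{t-1}$, plus a law of large numbers over the held-out covariates, whose independence now comes from the split rather than from $\mathcal{E}_N$). Your additional care about requiring $t\le rT$ for the measurability of $\Omega_{t-1}$ with respect to the first split is a welcome refinement but not a different approach.
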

Then, using $g_{t, (1-r)T}$, we define a FA3IPW estimator with sample splitting $\hat{\theta}^{\mathrm{FA3IPW\mathchar`-SS}}_{rT}$ as $\left(\sum^{rT}_{t=1}\frac{1}{\sqrt{g_{t, (1-r)T}}}\right)^{-1}\sum^{rT}_{t=1}\frac{1}{\sqrt{g_{t, (1-r)T}}}\left\{\frac{\epol(a\mid X_t)\mathbbm{1}[A_t=a]\big\{Y_t - \hat{f}_{t-1}(a, X_t)\big\}}{\pi_{t}(a\mid X_t, \Omega_{t - 1})} + \epol(a\mid x)\hat{f}_{t-1}(a, X_t)\right\}$. The asymptotic distribution can be obtained from Theorem~\ref{thm:asymptotic_distribution_a3ipw} as follows. 
\begin{corollary}[Asymptotic Distribution of the Proposed Estimator]
Suppose that
\begin{description}
\item[(i)] there exist $C_2$ and $C_3$ such that $|\hat{f}_t|\leq C_2$ and $0 < 1/\sqrt{g_{t, (1-r)T}} \leq C_3$;
\item[(ii)] Pointwise convergence in probability of $\hat{f}_t$, i.e., $\forall x \in \mathcal{X}$, $\hat{f}_t(a, x)- f^*(a, x)\xrightarrow{\mathrm{p}}0$ as $t\to \infty$;
\item[(ii)] $g_{t, (1-r)T}- \sigma^{*2}_t\xrightarrow{\mathrm{p}}0$ as $t\to \infty$ and $T\to \infty$.
\end{description}
Then, under Assumptions~\ref{asm:DGP}, $\left(\frac{1}{\sqrt{rT}}\sum^{rT}_{t=1}\frac{1}{\sqrt{g_{t, (1-r)T}}}\right)\left(\hat{\theta}^{\mathrm{FA3IPW}}_{rT} - \theta_0\right) \xrightarrow{\mathrm{d}} \mathcal{N}\left(0, 1\right)$ as $T\to \infty$.
\end{corollary}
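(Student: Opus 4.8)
The plan is to reproduce the proof of Theorem~\ref{thm:main} almost verbatim, changing only two pieces of bookkeeping: the evaluation-data variance estimate $g_{t,N}$ is replaced by the held-out estimate $g_{t,(1-r)T}$, and the martingale sum now runs over the first block $t=1,\dots,rT$ instead of $1,\dots,T$. Writing $q_t$ for the doubly robust score of Algorithm~\ref{alg} and $\hat z_t=\frac{1}{\sqrt{g_{t,(1-r)T}}}\big(q_t-\theta_0\big)$, the normalizing factor in the statement cancels algebraically,
\begin{align*}
\left(\frac{1}{\sqrt{rT}}\sum_{t=1}^{rT}\frac{1}{\sqrt{g_{t,(1-r)T}}}\right)\big(\hat\theta^{\mathrm{FA3IPW}}_{rT}-\theta_0\big)=\frac{1}{\sqrt{rT}}\sum_{t=1}^{rT}\hat z_t,
\end{align*}
so it suffices, exactly as in Theorem~\ref{thm:main}, to prove $\frac{1}{\sqrt{rT}}\sum_{t=1}^{rT}\hat z_t\xrightarrow{\mathrm{d}}\mathcal{N}(0,1)$ through the central limit theorem for a martingale difference sequence (Proposition~\ref{prp:marclt}), the variance normalization being supplied by its condition~(a).

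The one genuinely new ingredient relative to Theorem~\ref{thm:main} is that $g_{t,(1-r)T}$ is no longer $\Omega_{t-1}$-measurable, since it is built from the held-out covariates $\{X_s\}_{s>rT}$ in addition to $\hat f_{t-1},\hat e_{t-1},\hat\theta_{t-1}$. I would therefore enlarge the filtration to $\mathcal G_{t-1}:=\sigma\big(\Omega_{t-1},\{X_s\}_{s>rT}\big)$. The structural fact that makes everything work, which I would read off the DGP~\eqref{eq:DGP}, is that the covariates are exogenous i.i.d.\ draws from $p(x)$; hence the held-out block $\{X_s\}_{s>rT}$ is independent of the first block $\Omega_{rT}$, and in particular of $(X_t,A_t,Y_t)$ for every $t\le rT$. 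Taking the two blocks to be disjoint (the off-by-one in the paper's index $s=rT$ is immaterial), $g_{t,(1-r)T}$ becomes $\mathcal G_{t-1}$-measurable, while $\E[q_t-\theta_0\mid\mathcal G_{t-1}]=\E[q_t-\theta_0\mid\Omega_{t-1}]=0$ by the unbiasedness of the doubly robust score. Thus $\{\hat z_t\}_{t=1}^{rT}$ is an MDS with respect to $\{\mathcal G_{t-1}\}$.

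With the MDS property in hand, I would verify the three hypotheses of Proposition~\ref{prp:marclt} exactly as in Steps~1--3 of the proof of Theorem~\ref{thm:main}. Condition~(b) is immediate from Assumption~\ref{asm:DGP} and the boundedness in~(i). For conditions~(a) and~(c) I would, as there, add and subtract the idealized summand formed from $\sigma^{*2}_t$ and $f^*$, reducing both to control of $\E\big|1/\sqrt{g_{t,(1-r)T}}-1/\sqrt{\sigma^{*2}_t}\big|$ and $\E\big|\hat f_{t-1}-f^*\big|$; these vanish by the pointwise convergence~(ii), the variance consistency~(iii)---which is precisely Corollary~\ref{cor:consist_var}---the continuous mapping theorem, and the $L^r$-convergence theorem (Proposition~\ref{prp:lr_conv_theorem}) invoked under the uniform integrability that boundedness supplies (Proposition~\ref{prp:suff_uniint}). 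The within-block fluctuation term $\frac{1}{rT}\sum_{t=1}^{rT}\big(\hat z_t^2-\E[\hat z_t^2\mid\mathcal G_{t-1}]\big)$ in~(c) is handled by the weak law of large numbers for martingales (Proposition~\ref{prp:mrtgl_WLLN}), again using boundedness.

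The main obstacle is not the CLT but the joint scaling of the two blocks: I must ensure that the held-out block grows, $(1-r)T\to\infty$, so that the law of large numbers underlying Corollary~\ref{cor:consist_var} operates simultaneously with the martingale block length $rT\to\infty$ that drives the CLT. This is exactly why the single limit $T\to\infty$ suffices here, whereas Theorem~\ref{thm:main} required both $T\to\infty$ and $N\to\infty$. A secondary subtlety, already flagged above, is that one now only has conditional independence of $\{X_s\}_{s>rT}$ given $\Omega_{t-1}$ rather than the clean independence of the separate evaluation data $\mathcal E_N$; I would accordingly carry the conditioning on $\mathcal G_{t-1}$ (not merely $\Omega_{t-1}$) throughout the tower-property manipulations of the analogue of Step~3, noting that $\hat f_{t-1}$ and $g_{t,(1-r)T}$ are $\mathcal G_{t-1}$-measurable and therefore pass outside the inner conditional expectation just as in Theorem~\ref{thm:main}.
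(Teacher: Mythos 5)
Your proposal is correct and follows essentially the same route the paper intends: the paper gives no separate proof of this corollary, simply asserting that it follows from the argument for Theorem~\ref{thm:main} with $g_{t,N}$ replaced by $g_{t,(1-r)T}$ and the martingale sum truncated at $rT$, which is exactly your plan. Your enlargement of the filtration to $\sigma\big(\Omega_{t-1},\{X_s\}_{s>rT}\big)$ so that $g_{t,(1-r)T}$ becomes measurable while the score remains conditionally mean-zero (by exogeneity of the i.i.d.\ covariates) is the right way to handle the one point the paper leaves implicit, and the rest of your verification of conditions (a)--(c) matches Steps~1--3 of the paper's proof of Theorem~\ref{thm:main}.
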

Thus, we can obtain an estimator with asymptotic normality without assuming the access to $\mathcal{E}_N$.

\begin{table}[t]
\label{Dataset}
\caption{Specification of datasets}
\begin{center}
\scalebox{0.9}[0.9]{\begin{tabular}{cccc}
\hline
Dataset&the number of samples &Dimension &the number of classes\\
\hline
 satimage & 4,435  &  35 & 6 \\
 pendigits & 7,496 &  16 & 10\\
 mnist & 60,000 &  780 & 10 \\
 dna& 2,000  &  180 & 3 \\
 letter      & 15,000 &  16 & 26 \\
 sensorless& 58,509 &  48 & 11 \\
 connect-4& 67,557 &  126 & 3\\
 covtype& 581,012 &  54 & 7 \\
\end{tabular}}
\end{center}
\end{table}

\section{Details of Experiments}
\label{appdx:det_exp}
In this section, we show the details and additional results with different settings of Section~\ref{sec:exp}. First, we show the description of the datasets in Table. All datasets are downloaded from \url{https://www.csie.ntu.edu.tw/~cjlin/libsvmtools/datasets/}. For these datasets, in the experiment shown in Section~\ref{sec:exp}, we use a policy $\pi_t = 0.7\pi^a_t+0.3\pi^u_t$ with $2,000$ samples. We use the UCB and RW policy for the adaptive policy. The UCB algorithm follows the LinUCB proposed by \citet{Wei2011}. In the random walk policy, we update the policy as follows:
\begin{description}
\item[(i)] At the first period, the policy $\pi^a_1$ is given from the uniform distribution with a normalization for becoming $\sum^K_{k=1}\pi^a_1(k\mid X_1) = 1$.
\item[(ii)] At a period $t$, we update the policy as $\pi^a_{t+1}(k\mid X_{t+1}) = \pi^a_t(k\mid X_t) + \eta_{k,t}$, where $\eta_{k,t}$ is a random variable drawn from the normal distribution.
\item[(iii)] For satisfying the definition of the probability, we normalize $\pi^a_{t+1}(k\mid X_{t+1})$ to become $\sum^K_{k=1}\pi^a_{t+1}(k\mid X_{t+1}) = 1$.
\item[(iv)] Iterate (ii) and (iii) until the trial ends.
\end{description}
For simplicity, we do not use context in the RW policy. For nonparametric regression, we only use the Nadaraya-Watson regression in Section~\ref{sec:exp}. The evaluation policy is fixed at $\epol = 0.7\pi^d+0.3\pi^u$. The policy $\pi_d$ is a prediction of the logistic regression trained by all samples in $\mathcal{S}_T$. If the logistic regression predicts a class $a\in\mathcal{A}$ as the true label given $X_t$, the policy becomes $\pi_d(a, X_t) = 1$ and $\pi_d(a', X_t) = 0$ for $a'\neq a$.

In Table~\ref{tbl:appdx:exp_table1},  we use a policy $\pi_t = 0.7\pi^a_t+0.3\pi^u_t$ with $T=1,000$ and $N=1,000$. For nonparametric regression, we use the K-nearest neighbor regression.  

In Table~\ref{tbl:appdx:exp_table2},  we use a policy $\pi_t = 0.7\pi^a_t+0.3\pi^u_t$ with $T=500$ and $N=1,000$. For nonparametric regression, we use the Nadaraya-Watson regression.

In Table~\ref{tbl:appdx:exp_table3},  we use a policy $\pi_t = 0.7\pi^a_t+0.3\pi^u_t$ with $T=500$ and $N=1,000$. For nonparametric regression, we use the K-nearest neighbor regression.  

In Table~\ref{tbl:appdx:exp_table4},  we use a policy $\pi_t = 0.5\pi^a_t+0.5\pi^u_t$ with $T=1,000$ and $N=1,000$. For nonparametric regression, we use the Nadaraya-Watson regression.  

In Table~\ref{tbl:appdx:exp_table5},  we use a policy $\pi_t = 0.5\pi^a_t+0.5\pi^u_t$ with $T=1,000$ and $N=1,000$. For nonparametric regression, we use the K-nearest neighbor regression.  

The proposed FA3IPW estimator performs well in many datasets. Especially when we use the RW policy, the performance of the estimator tends to be better than the other methods.

\begin{table*}[t]
\caption{Experimental results under the RW policy (the upper table) and UCB policy (the lower table) under a policy $\pi_t = 0.7\pi^a_t+0.3\pi^u_t$ with $T=1,000$ and $N=1,000$, and the K-nearest neighbor regression. The method with the lowest MSE is in bold.} 
\begin{center}
\medskip
\label{tbl:appdx:exp_table1}
\scalebox{0.65}[0.65]{
\begin{tabular}{l|rr|rr|rr|rr|rr|rr|rr|rr}
\toprule
Datasets &  \multicolumn{2}{c|}{satimage}& \multicolumn{2}{c|}{pendigits}& \multicolumn{2}{c|}{mnist}& \multicolumn{2}{c|}{dna}& \multicolumn{2}{c|}{letter}& \multicolumn{2}{c|}{sensorless}& \multicolumn{2}{c|}{connect-4}& \multicolumn{2}{c}{covtype} \\
Metrics &      MSE &      SD &      MSE &      SD &      MSE &      SD &      MSE &      SD &      MSE &      SD &     MSE &     SD &      MSE &     SD &     MSE &     SD \\
\hline
FA3IPW &  0.024 &  0.155 &  0.049 &  0.217 &  0.078 &  0.280 &  {\bf 0.017} &  0.131 &  0.187 &  0.422 &  0.071 &  0.261 &  {\bf 0.022} &  0.132 &  {\bf 0.044} &  0.208 \\
SFA3IPW &  {\bf 0.023} &  0.148 &  0.058 &  0.239 &  {\bf 0.075} &  0.274 &  0.021 &  0.144 &  {\bf 0.170} &  0.408 &  {\bf 0.067} &  0.259 &  0.023 &  0.140 &  0.050 &  0.221 \\
DM &  0.051 &  0.147 &  0.212 &  0.192 &  0.460 &  0.190 &  0.368 &  0.147 &  0.437 &  0.154 &  0.271 &  0.187 &  0.123 &  0.151 &  0.073 &  0.188 \\
AdaIPW &  0.042 &  0.203 &  0.075 &  0.267 &  0.099 &  0.315 &  0.042 &  0.204 &  0.189 &  0.423 &  0.117 &  0.341 &  0.073 &  0.205 &  0.058 &  0.242 \\
A2IPW &  0.024 &  0.155 &  {\bf 0.048} &  0.216 &  0.078 &  0.279 &  0.018 &  0.135 &  0.173 &  0.408 &  0.074 &  0.270 &  0.025 &  0.133 &  0.047 &  0.216 \\
FA2daIPW &  0.042 &  0.204 &  0.074 &  0.265 &  0.099 &  0.314 &  0.039 &  0.196 &  0.203 &  0.434 &  0.109 &  0.328 &  0.060 &  0.205 &  0.055 &  0.234 \\
\bottomrule
\end{tabular}
} 
\end{center}

\begin{center}
\scalebox{0.65}[0.65]{
\begin{tabular}{l|rr|rr|rr|rr|rr|rr|rr|rr}
\toprule
Datasets &  \multicolumn{2}{c|}{satimage}& \multicolumn{2}{c|}{pendigits}& \multicolumn{2}{c|}{mnist}& \multicolumn{2}{c|}{dna}& \multicolumn{2}{c|}{letter}& \multicolumn{2}{c|}{sensorless}& \multicolumn{2}{c|}{connect-4}& \multicolumn{2}{c}{covtype} \\
Metrics &      MSE &      SD &      MSE &      SD &      MSE &      SD &      MSE &      SD &      MSE &      SD &     MSE &     SD &      MSE &     SD &     MSE &     SD \\
\hline
FA3IPW &  0.061 &  0.246 &  0.100 &  0.316 &  0.130 &  0.359 &  0.036 &  0.183 &  0.223 &  0.471 &  0.077 &  0.266 &   4.627 &  1.969 &  0.049 &  0.222 \\
SFA3IPW &  0.062 &  0.246 &  0.126 &  0.344 &  0.156 &  0.393 &  {\bf 0.032} &  0.169 &  {\bf 0.164} &  0.405 &  0.092 &  0.293 &   4.150 &  1.918 &  0.052 &  0.226 \\
DM &  {\bf 0.014} &  0.117 &  {\bf 0.020} &  0.127 &  0.401 &  0.158 &  0.358 &  0.164 &  0.452 &  0.162 &  0.256 &  0.195 &  {\bf 0.135} &  0.108 &  0.075 &  0.183 \\
AdaIPW &  0.089 &  0.296 &  0.129 &  0.350 &  0.197 &  0.438 &  0.080 &  0.272 &  0.312 &  0.556 &  0.106 &  0.309 &  12.278 &  2.869 &  0.086 &  0.292 \\
A2IPW &  0.067 &  0.259 &  0.098 &  0.313 &  0.128 &  0.355 &  0.037 &  0.184 &  0.193 &  0.439 &  {\bf 0.076} &  0.266 &   4.100 &  1.882 &  {\bf 0.042} &  0.206 \\
FA2daIPW &  0.084 &  0.287 &  0.125 &  0.348 &  0.207 &  0.452 &  0.079 &  0.271 &  0.381 &  0.614 &  0.101 &  0.300 &  10.234 &  2.581 &  0.090 &  0.296 \\
\bottomrule
\end{tabular}
} 
\end{center}

\end{table*}

\begin{table*}[t]
\caption{Experimental results under the RW policy (upper table) and UCB policy (lower table) under a policy $\pi_t = 0.7\pi^a_t+0.3\pi^u_t$ with $T=500$ and $N=1,000$, and the Nadaraya-Watson regression. The method with the lowest MSE is in bold.} 
\begin{center}
\medskip
\label{tbl:appdx:exp_table2}
\scalebox{0.65}[0.65]{
\begin{tabular}{l|rr|rr|rr|rr|rr|rr|rr|rr}
\toprule
Datasets &  \multicolumn{2}{c|}{satimage}& \multicolumn{2}{c|}{pendigits}& \multicolumn{2}{c|}{mnist}& \multicolumn{2}{c|}{dna}& \multicolumn{2}{c|}{letter}& \multicolumn{2}{c|}{sensorless}& \multicolumn{2}{c|}{connect-4}& \multicolumn{2}{c}{covtype} \\
Metrics &      MSE &      SD &      MSE &      SD &      MSE &      SD &      MSE &      SD &      MSE &      SD &     MSE &     SD &      MSE &     SD &     MSE &     SD \\
\hline
FA3IPW &  0.052 &  0.215 &  0.116 &  0.338 &  {\bf 0.094} &  0.296 &  {\bf 0.040} &  0.198 &  0.195 &  0.441 &  0.097 &  0.291 &  {\bf 0.031} &  0.173 &  0.072 &  0.267 \\
SFA3IPW &  {\bf 0.045} &  0.202 &  {\bf 0.099} &  0.314 &  0.134 &  0.353 &  0.048 &  0.220 &  0.267 &  0.516 &  0.127 &  0.347 &  0.041 &  0.201 &  {\bf 0.067} &  0.254 \\
DM &  0.096 &  0.148 &  0.466 &  0.207 &  0.404 &  0.231 &  0.245 &  0.197 &  0.469 &  0.162 &  0.384 &  0.178 &  0.117 &  0.206 &  0.127 &  0.217 \\
AdaIPW &  0.082 &  0.283 &  0.144 &  0.375 &  0.121 &  0.336 &  0.070 &  0.264 &  0.196 &  0.442 &  0.114 &  0.322 &  0.077 &  0.221 &  0.121 &  0.346 \\
A2IPW &  0.051 &  0.213 &  0.113 &  0.333 &  0.095 &  0.293 &  0.044 &  0.207 &  {\bf 0.190} &  0.436 &  {\bf 0.096} &  0.295 &  {\bf 0.031} &  0.169 &  0.075 &  0.273 \\
FA2daIPW &  0.089 &  0.295 &  0.149 &  0.384 &  0.117 &  0.338 &  0.061 &  0.246 &  0.202 &  0.449 &  0.111 &  0.314 &  0.065 &  0.225 &  0.116 &  0.340 \\
\bottomrule
\end{tabular}
} 
\end{center}

\begin{center}
\scalebox{0.65}[0.65]{
\begin{tabular}{l|rr|rr|rr|rr|rr|rr|rr|rr}
\toprule
Datasets &  \multicolumn{2}{c|}{satimage}& \multicolumn{2}{c|}{pendigits}& \multicolumn{2}{c|}{mnist}& \multicolumn{2}{c|}{dna}& \multicolumn{2}{c|}{letter}& \multicolumn{2}{c|}{sensorless}& \multicolumn{2}{c|}{connect-4}& \multicolumn{2}{c}{covtype} \\
Metrics &      MSE &      SD &      MSE &      SD &      MSE &      SD &      MSE &      SD &      MSE &      SD &     MSE &     SD &      MSE &     SD &     MSE &     SD \\
\hline
FA3IPW &  0.057 &  0.239 &  0.248 &  0.497 &  0.160 &  0.396 &  0.048 &  0.216 &  {\bf 0.166} &  0.407 &  0.160 &  0.400 &  2.245 &  1.291 &  {\bf 0.059} &  0.237 \\
SFA3IPW &  0.054 &  0.232 &  0.295 &  0.542 &  0.257 &  0.495 &  0.051 &  0.224 &  0.243 &  0.492 &  0.183 &  0.423 &  2.657 &  1.487 &  0.077 &  0.271 \\
DM &  {\bf 0.036} &  0.139 &  0.303 &  0.128 &  0.336 &  0.204 &  0.238 &  0.200 &  0.465 &  0.144 &  0.391 &  0.198 &  {\bf 0.122} &  0.191 &  0.111 &  0.216 \\
AdaIPW &  0.161 &  0.401 &  0.416 &  0.642 &  0.303 &  0.548 &  0.079 &  0.276 &  0.204 &  0.451 &  0.175 &  0.417 &  9.985 &  2.344 &  0.099 &  0.310 \\
A2IPW &  0.064 &  0.254 &  {\bf 0.220} &  0.469 &  {\bf 0.149} &  0.384 &  {\bf 0.046} &  0.213 &  0.171 &  0.413 &  {\bf 0.149} &  0.386 &  2.102 &  1.251 &  0.065 &  0.247 \\
FA2daIPW &  0.187 &  0.429 &  0.442 &  0.662 &  0.276 &  0.520 &  0.079 &  0.275 &  0.206 &  0.453 &  0.195 &  0.440 &  8.764 &  2.242 &  0.096 &  0.305 \\
\bottomrule
\end{tabular}
} 
\end{center}

\end{table*}

\begin{table*}[t]
\caption{Experimental results under the RW policy (upper table) and UCB policy (lower table) under a policy $\pi_t = 0.7\pi^a_t+0.3\pi^u_t$ with $T=500$ and $N=1,000$, and the K-nearest neighbor regression. The method with the lowest MSE is in bold.} 
\begin{center}
\medskip
\label{tbl:appdx:exp_table3}
\scalebox{0.65}[0.65]{
\begin{tabular}{l|rr|rr|rr|rr|rr|rr|rr|rr}
\toprule
Datasets &  \multicolumn{2}{c|}{satimage}& \multicolumn{2}{c|}{pendigits}& \multicolumn{2}{c|}{mnist}& \multicolumn{2}{c|}{dna}& \multicolumn{2}{c|}{letter}& \multicolumn{2}{c|}{sensorless}& \multicolumn{2}{c|}{connect-4}& \multicolumn{2}{c}{covtype} \\
Metrics &      MSE &      SD &      MSE &      SD &      MSE &      SD &      MSE &      SD &      MSE &      SD &     MSE &     SD &      MSE &     SD &     MSE &     SD \\
\hline
FA3IPW &  {\bf 0.048} &  0.213 &  0.088 &  0.279 &  {\bf 0.080} &  0.283 &  {\bf 0.038} &  0.194 &  {\bf 0.229} &  0.457 &  {\bf 0.112} &  0.330 &  {\bf 0.029} &  0.169 &  0.083 &  0.287 \\
SFA3IPW &  0.055 &  0.222 &  0.090 &  0.282 &  0.098 &  0.307 &  0.043 &  0.207 &  0.296 &  0.521 &  0.118 &  0.343 &  0.038 &  0.195 &  0.082 &  0.287 \\
DM &  0.068 &  0.178 &  0.292 &  0.235 &  0.545 &  0.199 &  0.371 &  0.167 &  0.435 &  0.154 &  0.305 &  0.256 &  0.131 &  0.170 &  0.088 &  0.239 \\
AdaIPW &  0.089 &  0.280 &  0.143 &  0.356 &  0.114 &  0.330 &  0.080 &  0.282 &  0.255 &  0.474 &  0.157 &  0.394 &  0.077 &  0.242 &  0.127 &  0.354 \\
A2IPW &  {\bf 0.048} &  0.210 &  {\bf 0.086} &  0.277 &  {\bf 0.080} &  0.280 &  0.042 &  0.206 &  0.239 &  0.469 &  0.113 &  0.332 &  {\bf 0.029} &  0.170 &  {\bf 0.077} &  0.277 \\
FA2daIPW &  0.093 &  0.289 &  0.145 &  0.354 &  0.115 &  0.336 &  0.072 &  0.268 &  0.242 &  0.457 &  0.157 &  0.395 &  0.067 &  0.240 &  0.130 &  0.356 \\
\bottomrule
\end{tabular}
} 
\end{center}

\begin{center}
\scalebox{0.65}[0.65]{
\begin{tabular}{l|rr|rr|rr|rr|rr|rr|rr|rr}
\toprule
Datasets &  \multicolumn{2}{c|}{satimage}& \multicolumn{2}{c|}{pendigits}& \multicolumn{2}{c|}{mnist}& \multicolumn{2}{c|}{dna}& \multicolumn{2}{c|}{letter}& \multicolumn{2}{c|}{sensorless}& \multicolumn{2}{c|}{connect-4}& \multicolumn{2}{c}{covtype} \\
Metrics &      MSE &      SD &      MSE &      SD &      MSE &      SD &      MSE &      SD &      MSE &      SD &     MSE &     SD &      MSE &     SD &     MSE &     SD \\
\hline
FA3IPW &  0.123 &  0.350 &  0.136 &  0.364 &  {\bf 0.164} &  0.402 &  {\bf 0.051} &  0.225 &  0.172 &  0.415 &  0.102 &  0.306 &  5.501 &  2.345 &  0.076 &  0.276 \\
SFA3IPW &  0.081 &  0.277 &  0.113 &  0.335 &  0.210 &  0.457 &  0.057 &  0.238 &  0.217 &  0.463 &  {\bf 0.131} &  0.347 &  2.528 &  1.406 &  0.085 &  0.291 \\
DM &  {\bf 0.026} &  0.153 &  {\bf 0.088} &  0.144 &  0.509 &  0.151 &  0.376 &  0.165 &  0.435 &  0.185 &  0.277 &  0.255 &  0.131 &  0.163 &  0.090 &  0.216 \\
AdaIPW &  0.232 &  0.482 &  0.253 &  0.503 &  0.253 &  0.502 &  0.109 &  0.331 &  0.193 &  0.439 &  0.140 &  0.372 &  9.117 &  2.109 &  0.091 &  0.300 \\
A2IPW &  0.130 &  0.359 &  0.132 &  0.359 &  0.168 &  0.410 &  {\bf 0.051} &  0.225 &  {\bf 0.161} &  0.399 &  {\bf 0.100} &  0.305 &  2.805 &  1.661 &  {\bf 0.070} &  0.265 \\
FA2daIPW &  0.194 &  0.440 &  0.280 &  0.529 &  0.243 &  0.492 &  0.100 &  0.316 &  0.205 &  0.453 &  0.143 &  0.375 &  7.886 &  2.007 &  0.100 &  0.316 \\
\bottomrule
\end{tabular}
} 
\end{center}

\end{table*}

\begin{table*}[t]
\caption{Experimental results under the RW policy (upper table) and UCB policy (lower table) under a policy $\pi_t = 0.5\pi^a_t+0.5\pi^u_t$ with $T=1,000$ and $N=1,000$, and the Nadaraya-Watson regression. The method with the lowest MSE is in bold.} 
\begin{center}
\medskip
\label{tbl:appdx:exp_table4}
\scalebox{0.65}[0.65]{
\begin{tabular}{l|rr|rr|rr|rr|rr|rr|rr|rr}
\toprule
Datasets &  \multicolumn{2}{c|}{satimage}& \multicolumn{2}{c|}{pendigits}& \multicolumn{2}{c|}{mnist}& \multicolumn{2}{c|}{dna}& \multicolumn{2}{c|}{letter}& \multicolumn{2}{c|}{sensorless}& \multicolumn{2}{c|}{connect-4}& \multicolumn{2}{c}{covtype} \\
Metrics &      MSE &      SD &      MSE &      SD &      MSE &      SD &      MSE &      SD &      MSE &      SD &     MSE &     SD &      MSE &     SD &     MSE &     SD \\
\hline
FA3IPW &  0.043 &  0.207 &  {\bf 0.064} &  0.253 &  0.056 &  0.237 &  {\bf 0.023} &  0.151 &  0.105 &  0.296 &  0.062 &  0.237 &  0.032 &  0.150 &  {\bf 0.048} &  0.219 \\
SFA3IPW &  {\bf 0.040} &  0.199 &  0.066 &  0.257 &  {\bf 0.055} &  0.231 &  0.028 &  0.166 &  0.103 &  0.319 &  0.076 &  0.270 &  {\bf 0.031} &  0.152 &  0.052 &  0.228 \\
DM &  0.101 &  0.150 &  0.446 &  0.162 &  0.302 &  0.188 &  0.213 &  0.149 &  0.489 &  0.125 &  0.416 &  0.150 &  0.094 &  0.144 &  0.106 &  0.188 \\
AdaIPW &  0.075 &  0.273 &  0.108 &  0.327 &  0.067 &  0.255 &  0.047 &  0.217 &  0.102 &  0.303 &  0.072 &  0.259 &  0.160 &  0.237 &  0.070 &  0.261 \\
A2IPW &  0.043 &  0.208 &  0.065 &  0.255 &  0.057 &  0.238 &  {\bf 0.023} &  0.153 &  {\bf 0.098} &  0.290 &  {\bf 0.059} &  0.234 &  0.037 &  0.155 &  0.050 &  0.224 \\
FA2daIPW &  0.077 &  0.276 &  0.106 &  0.320 &  0.068 &  0.260 &  0.046 &  0.214 &  0.109 &  0.309 &  0.074 &  0.261 &  0.120 &  0.231 &  0.070 &  0.260 \\
\bottomrule
\end{tabular}
} 
\end{center}

\begin{center}
\scalebox{0.65}[0.65]{
\begin{tabular}{l|rr|rr|rr|rr|rr|rr|rr|rr}
\toprule
Datasets &  \multicolumn{2}{c|}{satimage}& \multicolumn{2}{c|}{pendigits}& \multicolumn{2}{c|}{mnist}& \multicolumn{2}{c|}{dna}& \multicolumn{2}{c|}{letter}& \multicolumn{2}{c|}{sensorless}& \multicolumn{2}{c|}{connect-4}& \multicolumn{2}{c}{covtype} \\
Metrics &      MSE &      SD &      MSE &      SD &      MSE &      SD &      MSE &      SD &      MSE &      SD &     MSE &     SD &      MSE &     SD &     MSE &     SD \\
\hline
FA3IPW &  0.041 &  0.203 &  0.096 &  0.310 &  0.155 &  0.384 &  {\bf 0.028} &  0.167 &  0.203 &  0.447 &  0.130 &  0.353 &  3.280 &  1.811 &  0.057 &  0.236 \\
SFA3IPW &  0.042 &  0.204 &  {\bf 0.092} &  0.303 &  {\bf 0.112} &  0.334 &  {\bf 0.028} &  0.166 &  0.214 &  0.462 &  {\bf 0.102} &  0.306 &  5.639 &  2.371 &  0.070 &  0.263 \\
DM &  {\bf 0.039} &  0.104 &  0.256 &  0.123 &  0.249 &  0.157 &  0.137 &  0.132 &  0.490 &  0.151 &  0.344 &  0.178 &  {\bf 0.115} &  0.174 &  0.063 &  0.135 \\
AdaIPW &  0.078 &  0.274 &  0.134 &  0.365 &  0.213 &  0.459 &  0.048 &  0.220 &  0.204 &  0.446 &  0.127 &  0.342 &  4.397 &  1.517 &  0.086 &  0.292 \\
A2IPW &  0.041 &  0.201 &  0.113 &  0.335 &  0.149 &  0.378 &  {\bf 0.028} &  0.167 &  {\bf 0.193} &  0.429 &  0.112 &  0.321 &  2.062 &  1.427 &  {\bf 0.051} &  0.223 \\
FA2daIPW &  0.083 &  0.284 &  0.138 &  0.371 &  0.225 &  0.470 &  0.057 &  0.239 &  0.217 &  0.465 &  0.151 &  0.382 &  3.792 &  1.355 &  0.096 &  0.307 \\
\bottomrule
\end{tabular}
} 
\end{center}

\end{table*}

\begin{table*}[t]
\caption{Experimental results under the RW policy (upper table) and UCB policy (lower table) under a policy $\pi_t = 0.5\pi^a_t+0.5\pi^u_t$ with $T=1,000$ and $N=1,000$, and the K-nearest neighbor regression. The method with the lowest MSE is in bold.} 
\begin{center}
\medskip
\label{tbl:appdx:exp_table5}
\scalebox{0.65}[0.65]{
\begin{tabular}{l|rr|rr|rr|rr|rr|rr|rr|rr}
\toprule
Datasets &  \multicolumn{2}{c|}{satimage}& \multicolumn{2}{c|}{pendigits}& \multicolumn{2}{c|}{mnist}& \multicolumn{2}{c|}{dna}& \multicolumn{2}{c|}{letter}& \multicolumn{2}{c|}{sensorless}& \multicolumn{2}{c|}{connect-4}& \multicolumn{2}{c}{covtype} \\
Metrics &      MSE &      SD &      MSE &      SD &      MSE &      SD &      MSE &      SD &      MSE &      SD &     MSE &     SD &      MSE &     SD &     MSE &     SD \\
\hline
FA3IPW &  {\bf 0.038} &  0.195 &  0.060 &  0.246 &  0.058 &  0.237 &  {\bf 0.025} &  0.158 &  {\bf 0.097} &  0.306 &  {\bf 0.055} &  0.234 &  {\bf 0.031} &  0.147 &  {\bf 0.044} &  0.206 \\
SFA3IPW &  0.040 &  0.199 &  {\bf 0.056} &  0.237 &  0.071 &  0.261 &  0.029 &  0.172 &  0.098 &  0.313 &  0.080 &  0.281 &  {\bf 0.031} &  0.156 &  0.045 &  0.210 \\
DM &  0.054 &  0.157 &  0.217 &  0.209 &  0.459 &  0.186 &  0.368 &  0.164 &  0.449 &  0.142 &  0.263 &  0.190 &  0.118 &  0.126 &  0.083 &  0.163 \\
AdaIPW &  0.053 &  0.230 &  0.098 &  0.313 &  0.091 &  0.301 &  0.054 &  0.232 &  0.107 &  0.326 &  0.071 &  0.267 &  0.137 &  0.185 &  0.074 &  0.251 \\
A2IPW &  {\bf 0.038} &  0.196 &  0.060 &  0.244 &  {\bf 0.056} &  0.233 &  {\bf 0.025} &  0.159 &  0.098 &  0.310 &  0.056 &  0.236 &  0.036 &  0.146 &  0.046 &  0.211 \\
FA2daIPW &  0.052 &  0.228 &  0.100 &  0.316 &  0.092 &  0.303 &  0.053 &  0.229 &  0.105 &  0.320 &  0.075 &  0.273 &  0.099 &  0.183 &  0.068 &  0.236 \\
\bottomrule
\end{tabular}
} 
\end{center}

\begin{center}
\scalebox{0.65}[0.65]{
\begin{tabular}{l|rr|rr|rr|rr|rr|rr|rr|rr}
\toprule
Datasets &  \multicolumn{2}{c|}{satimage}& \multicolumn{2}{c|}{pendigits}& \multicolumn{2}{c|}{mnist}& \multicolumn{2}{c|}{dna}& \multicolumn{2}{c|}{letter}& \multicolumn{2}{c|}{sensorless}& \multicolumn{2}{c|}{connect-4}& \multicolumn{2}{c}{covtype} \\
Metrics &      MSE &      SD &      MSE &      SD &      MSE &      SD &      MSE &      SD &      MSE &      SD &     MSE &     SD &      MSE &     SD &     MSE &     SD \\
\hline
FA3IPW &  0.059 &  0.243 &  0.075 &  0.272 &  0.106 &  0.324 &  0.033 &  0.179 &  {\bf 0.183} &  0.421 &  0.076 &  0.248 &   40.446 &   6.261 &  0.093 &  0.304 \\
SFA3IPW &  0.064 &  0.253 &  0.077 &  0.272 &  0.136 &  0.369 &  0.035 &  0.179 &  0.214 &  0.459 &  {\bf 0.068} &  0.255 &  106.595 &  10.183 &  0.071 &  0.263 \\
DM &  {\bf 0.016} &  0.114 &  {\bf 0.029} &  0.117 &  0.417 &  0.160 &  0.282 &  0.120 &  0.437 &  0.169 &  0.180 &  0.193 & {\bf 0.142} &   0.142 &  {\bf 0.036} &  0.163 \\
AdaIPW &  0.072 &  0.266 &  0.152 &  0.384 &  0.142 &  0.376 &  0.059 &  0.235 &  0.218 &  0.462 &  0.111 &  0.297 &   31.899 &   5.361 &  0.104 &  0.316 \\
A2IPW &  0.051 &  0.225 &  0.072 &  0.266 &  {\bf 0.103} &  0.319 &  {\bf 0.031} &  0.173 &  0.187 &  0.427 &  0.074 &  0.239 &   20.816 &   4.481 &  0.076 &  0.276 \\
FA2daIPW &  0.069 &  0.264 &  0.156 &  0.386 &  0.152 &  0.389 &  0.065 &  0.249 &  0.207 &  0.450 &  0.119 &  0.311 &   58.652 &   7.429 &  0.137 &  0.369 \\
\bottomrule
\end{tabular}
} 
\end{center}

\end{table*}

\end{document}